\documentclass[]{fairmeta}

\usepackage[utf8]{inputenc} 
\usepackage[T1]{fontenc}    
\usepackage{lmodern}       
\usepackage[english]{babel}
\makeatletter
\renewcommand{\title}[1]{%
  \gdef\titlelist{{\fontsize{19}{23}\selectfont\sffamily\bfseries #1}}%
}\makeatother

\usepackage{xspace}
\usepackage{wrapfig}
\usepackage{algorithm}        
\usepackage[noend]{algpseudocode}  
\algrenewcommand\algorithmicrequire{\textbf{Inputs:}}
\algrenewcommand\algorithmicensure{\textbf{Outputs:}}

\usepackage{amssymb}
\usepackage{pifont}
\usepackage{tcolorbox}

\usepackage{amsmath,amsfonts,bm}

\def\eqref#1{equation~\ref{#1}}

\def\1{\bm{1}}

\DeclareMathAlphabet{\mathsfit}{\encodingdefault}{\sfdefault}{m}{sl}
\SetMathAlphabet{\mathsfit}{bold}{\encodingdefault}{\sfdefault}{bx}{n}

\newcommand{\E}{\mathbb{E}}

\newcommand{\R}{\mathbb{R}}

\usepackage{wrapfig}

\usepackage{hyperref}       
\usepackage{url}

\usepackage{booktabs}       
\usepackage{amsfonts}       
\usepackage{nicefrac}       
\usepackage{microtype}      
\usepackage{xcolor}         

\definecolor{verylightgray}{gray}{0.93}

\usepackage{graphicx}
\usepackage{subcaption} 
\usepackage{wrapfig}

\usepackage{amsmath,amssymb,mathtools}
\usepackage{amsthm}

\usepackage{algorithm}
\usepackage{algpseudocode}

\usepackage{array}
\usepackage{multirow}
\usepackage{tabularx}
\usepackage{threeparttable}
\usepackage{colortbl}
\usepackage{siunitx}
\usepackage{tcolorbox}

\usepackage{newfloat}
\usepackage{listings}
\DeclareCaptionStyle{ruled}{labelfont=normalfont,labelsep=colon,strut=off}
\floatstyle{ruled}
\newfloat{listing}{tb}{lst}{}
\floatname{listing}{Listing}
\usepackage{enumitem}
\usepackage{cancel}
\usepackage{xspace}
\usepackage{pifont}
\usepackage{soul} 

\usepackage{booktabs,multirow,tabularx,array}
\newcommand{\ourmethod}{RRD\xspace}
\newtheorem{theorem}{Theorem}
\newtheorem{lemma}{Lemma}

\theoremstyle{definition}

\renewcommand{\epsilon}{\varepsilon}
\DeclareMathOperator{\op}{op}
\theoremstyle{plain}
\theoremstyle{remark}

\usepackage{bm} 
\newcommand{\vect}[1]{\bm{#1}} 

\newcommand{\norm}[1]{\left\lVert #1 \right\rVert}
\newcommand{\ip}[2]{\left\langle #1,\,#2 \right\rangle}

\newcommand{\tr}{\operatorname{tr}}

\usepackage[most]{tcolorbox}
\usepackage{enumitem}
\usepackage{listings}
\newtcolorbox{promptenv}[2][]{%
  enhanced,
  breakable,
  colback=white,
  colframe=black!,
  boxrule=0.5pt,
  arc=2mm,
  left=2mm,right=2mm,top=2mm,bottom=2mm,
  title={#2}, 
  #1
}

\title{Rethinking Rubric Generation for Improving LLM Judge and Reward Modeling for Open-ended Tasks} 

\author[\star]{William F. Shen}
\author{Xinchi Qiu}
\author{Chenxi Whitehouse}
\author[\star]{Lisa Alazraki}
\author[\star]{Shashwat Goel}
\author{Francesco Barbieri}
\author{Timon Willi}
\author{Akhil Mathur}
\author{Ilias Leontiadis}

\affiliation[]{Meta Superintelligence Labs}

\contribution[\star]{Work done at Meta}

\abstract{
Recently, rubrics have been used to guide LLM judges in capturing subjective, nuanced, multi-dimensional human preferences, and have been extended from evaluation to reward signals for reinforcement fine-tuning (RFT).
However, rubric generation remain hard to control: rubrics often lack coverage, conflate dimensions, misalign preference direction, and contain redundant or highly correlated criteria - degrading judge accuracy and producing suboptimal rewards during RFT.
We propose \ourmethod, a principled framework for rubric refinement built on a recursive decompose–filter cycle. \ourmethod decomposes coarse rubrics into fine-grained, discriminative criteria, expanding coverage while sharpening separation between responses. A complementary filtering mechanism removes misaligned and redundant rubrics, and a correlation-aware weighting scheme to prevent over-representing highly correlated criteria, yielding rubric sets that are informative, comprehensive, and non-redundant. 
Empirically, \ourmethod delivers large, consistent gains across both evaluation and training: it improves preference-judgment accuracy on JudgeBench and PPE for both GPT-4o and Llama3.1-405B judges, achieving top performance in all settings with up to $+17.7$ points on JudgeBench. When used as the reward source for RFT on WildChat, it yields substantially stronger and more stable learning signals, boosting reward by up to $160\%$ (Qwen3-4B) and $60\%$ (Llama3.1-8B) versus $\sim$10--20\% for prior rubric baselines, with gains that transfer to HealthBench-Hard and BiGGen Bench. Overall, \ourmethod establish recursive rubric refinement as a scalable and interpretable foundation for LLM judging and reward modeling in open-ended domains.
}

\date{\today}
\correspondence{\email{fs604@cam.ac.uk}, \email{iliasl@meta.com}}

\begin{document}

\maketitle

\section{Introduction}\label{sec:intro}

Large language models (LLMs) are increasingly used as judges (``LLM judge'') to evaluate open-ended generations (e.g., creative writing, planning, and roleplay)~\citep{gu2024survey}. However, because quality in these settings is subjective and inherently multi-attribute, LLM-based judging remains brittle: even frontier models can be near chance on preference benchmarks, and real-world use is further limited by bias, inconsistency, and limited transparency, especially when evaluation criteria are implicit or underspecified~\citep{thakur2024judging,tan2024judgebench,pezeshkpour2023large,saito2023verbosity,zheng2023judging,haldar2025rating,kim2023prometheus,gajcin2025interpreting}.

In parallel, recent work extends Reinforcement Learning from Verifiable Rewards (RLVR) beyond domains with objectively checkable outcomes (e.g., math and coding) to open-ended tasks under non-verifiable rewards~\citep{cui2025process,guo2025deepseek,lambert2024tulu}. A common approach converts binary correctness into pairwise preferences~\citep{ivison2024unpacking}, but this approach not only inherits the limitations of standard LLM-judge labeling, but also introduces a key scaling bottleneck: generating preference supervision that incorporates diverse and robust criteria matching the complexity of real-world reasoning~\citep{ouyang2022training,chen2024odin,singhal2023long,wang2024arithmetic,ye2025improving}.

One promising direction is rubric-based judging, where LLMs generate explicit, structured rubrics and use them to ground assessments, improving reliability and interpretability relative to holistic judgments~\citep{hashemi2024llm}. These rubric-level signals also integrate naturally into reinforcement fine-tuning (RFT) as structured rewards, motivating the Rubrics-as-Rewards paradigm for aligning models on complex open-ended tasks~\citep{gunjal2025rubrics}.

\begin{wrapfigure}{l}{0.52\textwidth}
  \centering
  \includegraphics[width=0.50\textwidth]{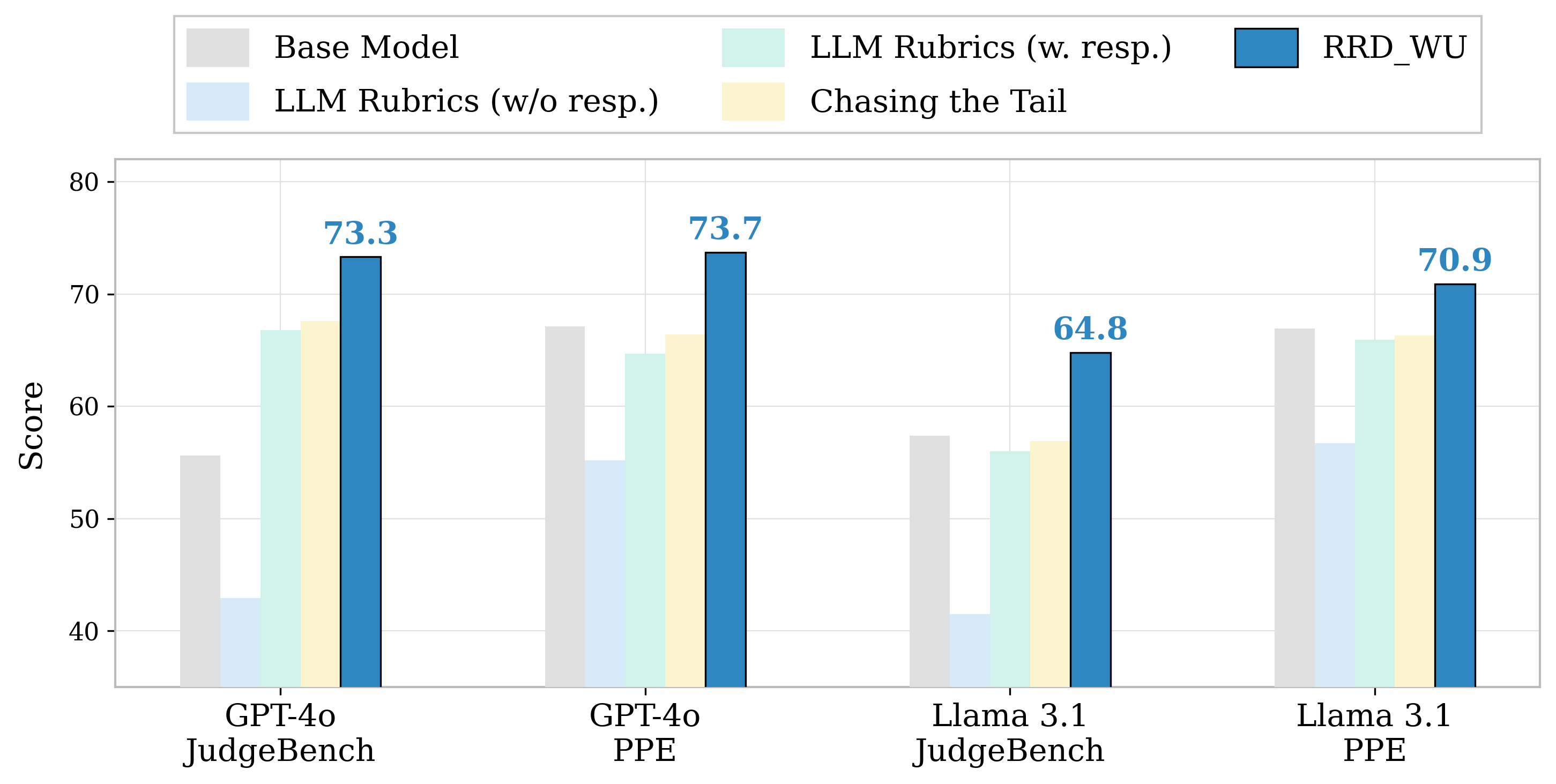}
  \caption{
  \ourmethod consistently outperforms all baselines on both JudgeBench and PPE for both proprietary (GPT-4o) and open-weights (Llama3.1-405B) judges, delivering substantial gains in preference-judgment accuracy.
  }
  \label{fig:judge_plot_wrap}
\end{wrapfigure}

However, existing rubric generation methods face two key limitations: (1) \textit{coverage deficiency}, where the rubric set fails to comprehensively capture the diverse and nuanced dimensions of generation quality; (2) \textit{noisy evaluation outcomes}, where misaligned, overlapping, or highly correlated rubrics introduce unreliable signals. Consequently, rubric-based judges often show weak alignment with human preferences, reducing evaluation accuracy, and when used as rewards for RFT, they yield only limited improvements in learning preference-aligned behavior across diverse model outputs. These flaws are severe: we show that naively generated rubrics degrade GPT-4o's judgment accuracy from 55.6\% to 42.9\% (Figure \ref{fig:judge_plot_wrap}) on JudgeBench \citep{tan2024judgebench} -- 13 points below using no rubrics at all.

In this work, we propose Recursive Rubric Decomposition (\ourmethod), a principled framework that improves both the accuracy of rubric-based judges and their effectiveness as reward models for downstream RFT. \ourmethod expands coverage by decomposing high-level rubric items—broad criteria satisfied by many responses—into finer-grained, nuanced subpoints. This yields more comprehensive and discriminative evaluations, helping LLM judges better capture subtle yet consequential quality differences when comparing candidate responses.
To complement rubric expansion, we introduce an aggregation mechanism to mitigate noise by (i) filtering rubrics that produce misaligned or conflicting signals, and (ii) removing redundant rubrics while down-weighting highly correlated ones. The first step guards against criteria that yield grossly incorrect judgments, while the second prevents overlapping perspectives from being overrepresented. Together, these components produce more stable and informative rubric-based assessments.
This process mirrors many real-world evaluations which follow a structured assessment rather than a single holistic verdict. Consider a physician diagnosing ambiguous symptoms: they don't render a holistic verdict, but forms hypotheses and orders discriminating tests. When a test result is consistent with multiple conditions, she orders more specific tests that distinguish between them. For example, a positive result for ``inflammatory markers'' doesn't determine whether the cause is autoimmune, infectious, or malignant - she must decompose further. The process terminates when tests discriminate: the remaining diagnosis is the only one consistent with all evidence. Crucially, she also recognizes correlated indicators—elevated CRP and ESR both signal inflammation but shouldn't be double-counted as independent evidence. \ourmethod instantiates this diagnostic logic: rubrics satisfied by multiple responses are insufficiently discriminative and are recursively decomposed; the process adapts naturally to case complexity, and correlated criteria are appropriately down-weighted.

We empirical demonstrate the performance of \ourmethod two-fold. We first evaluate \ourmethod on two widely used preference-judgment benchmarks: JudgeBench \citep{tan2024judgebench} and Preference Proxy Evaluation (PPE) \citep{frick2024evaluate}. Across both GPT-4o and Llama3.1-405B judges, \ourmethod consistently yields stronger agreement with human pairwise preferences, improving accuracy over the base judges and prior rubric-based baselines. In particular, our best variant (\ourmethod$_\text{WU}$) achieves the top score on all settings achieving up to 17.7\% improvement on JudgeBench for GPT-4o.

In addition, we demonstrate the effectiveness of \ourmethod in RFT by training open-source policies, Qwen3-4B~\citep{yang2025qwen3} and Llama3.1-8B~\citep{dubey2024llama}, with rubric-based judges on WildChat \citep{zhao2024wildchat} as the reward signal. Compared to prior LLM-judge-based and rubrics-based reward baselines, \ourmethod yields substantially stronger and better-calibrated rewards, translating into faster learning and higher final reward (Figure~\ref{fig:training_reward_curve}). Concretely, our method boosts reward by up to 160\% for Qwen3-4B and 60\% for Llama3.1-8B, versus only $\sim$10--20\% for LLM Rubrics and Chasing the Tail (Figure~\ref{fig:training_reward_curve}). These gains also carry over to downstream evaluations, where the resulting policies consistently improve on BiGGen Bench \citep{kim2024biggenbenchprincipledbenchmark} and HealthBench-Hard \citep{arora2025healthbench} for both model families.


Overall, our results show that \emph{recursive rubric decomposition} (\ourmethod) is a key enabler of effective rubric-based judging and reward modeling for open-ended tasks, turning brittle rubrics into reliable signals that support robust alignment in open-ended language generation.


\section{\ourmethod Framework}\label{sec:rrd_framework}

In this section, we first provide an overview of rubric-based judges in $\S$\ref{subsec:rubric_based_judge_intro}. We then formalize a theoretical perspective on rubric quality that motivates our approach. Finally, we present Recursive Rubric Decomposition (\ourmethod) in $\S$\ref{subsec:rrd_framework}, a principled framework that builds on this grounding to improve rubric-based evaluation.

\subsection{Rubric-based Judge Overview}\label{subsec:rubric_based_judge_intro}
Let $\mathcal{P}$ denote the prompt space and $\mathcal{R}$ the response space. For each $P\in\mathcal{P}$, consider a finite candidate set $R(P)=\{R_1,\dots,R_M\}\subseteq\mathcal{R}$. An LLM judge outputs a preference verdict $\mathcal{V}$ over $R(P)$.

A \emph{rubric-based} judge conditions its evaluation on a set of rubric predicates, evaluated separately. Each rubric is a measurable map
$
g:\mathcal{P}\times\mathcal{R}\to\{0,1\},
$
where $g(P,R)=1$ indicates that response $R$ satisfies the criterion under prompt $P$. Given a rubric family $\mathcal{G}=(g_1,\dots,g_m)$ and nonnegative weights $\vect{w}=(w_1,\dots,w_m)\in\mathbb{R}_+^m$, we define the rubric reward
\begin{equation}
f_{\vect{w},\mathcal{G}}(P,R)\;:=\;\sum_{k=1}^{m} w_k\, g_k(P,R)\,.
\label{eq:agg_rubric_reward}
\end{equation}
The constraint $w_k\ge 0$ is without loss of generality: any negatively polarized rubric can be rewritten as a positive, contrastive criterion. For two responses $R_i,R_j\in R(P)$, the judge prefers $R_i$ over $R_j$, denoted $R_i\succ R_j$, iff $f_{\vect{w},\mathcal{G}}(P,R_i) > f_{\vect{w},\mathcal{G}}(P,R_j)$.


\subsection{Theoretical Grounding for Rubric Quality}\label{subsec:theoretical_insights_rrd}

Since rubrics directly shape how LLM judges compare responses, and therefore the reward signals used in RFT, rubric quality is a primary lever for both judgment accuracy and reward fidelity. Ideally, a rubric set should be (a) \emph{informative}, with each criterion helping distinguish preferred from non-preferred responses; (b) \emph{comprehensive}, spanning the diverse dimensions along which quality varies; and (c) \emph{non-redundant}, providing complementary signals, since overlapping or correlated rubrics can distort aggregation. While prior work has made substantial progress (see \S\ref{sec:related}), achieving the three desiderata simultaneously remains challenging: expert-authored rubrics are often precise but can be limited in coverage, whereas LLM-generated rubrics scale readily but may include criteria that are overly generic, misaligned or highly correlated. More broadly, a principled theoretical account of rubric quality -- and how to systematically enforce it -- remains underdeveloped.

To bridge this gap and motivate \ourmethod, we first briefly present the theoretical grounding based on two assumptions: (A1) \emph{positive edge} and (A2) \emph{bounded correlation}. We then analyze the judge’s misclassification probability, whether aggregated rubric verdicts recover the true preference label, and show that it admits an exponential upper bound (Eq.~\ref{eq:upper_bound_prob_wrong_verdict}). Minimizing this bound yields a principled objective for rubric generation and correlation-aware weighting, directly operationalizing the three desiderata above.

Assume each rubric $g_k \in \mathcal{G}$ is weakly informative (a positive ``edge'' over random guessing) and its noise (the deviation of its verdicts from conditional mean given the true label) is sub-Gaussian with bounded dependence:
\begin{itemize}
\item[(A1)] (\emph{Positive edge}) There exist $\mu_k>0$ such that
$\mathbb{E}[\widehat Y_k\mid Y=+1]=+\mu_k,\ \mathbb{E}[\widehat Y_k\mid Y=-1]=-\mu_k$, where $Y \in \{\pm1\}$ is the ground-truth preference label, $\widehat Y_k$ is the verdict of the $k^{th}$ rubric. 

\item[(A2)] (\emph{Bounded correlation}) Letting $Z_k=\widehat Y_k-\mu_kY$, the vector $Z=(Z_1,\dots,Z_m)$ is mean-zero sub-Gaussian with covariance $\Sigma_y$ satisfying
$\mathrm{Var}(Z_k)\le \sigma_k^2$ and $|\mathrm{Corr}(Z_i,Z_j)|\le \rho<1$ for $i\neq j$.
\end{itemize}


In simple words, (A1) assumes that each rubric, with a positive edge ($\mu_k > 0$), contributes positively toward distinguishing the preferred response from the inferior one;
(A2) assumes that individual rubric's noise is bounded and rubrics are not repetitive (bounded pairwise correlation $\rho < 1$). Here, the noise $Z_k=\widehat Y_k-\mu_k Y$ captures rubric-specific randomness or systematic mismatch -- i.e., the part of a rubric's verdict not explained by the true label, arising from ambiguous criteria, imperfect judge execution, or instance-specific idiosyncrasies. This enables standard concentration bounds over the aggregated, non-redundant decisions.

The probability that the rubric-based judge, after aggregating decisions across all rubric items (Eq.\ref{eq:agg_rubric_reward}), produces an incorrect verdict (misclassification probability) $\widehat{Y} \ne Y$ is then upper bounded by:
\begin{equation}
\mathbb{P}(\widehat Y\ne Y)
\ \le\ 
\exp\!\Big(-\tfrac12\min\{\Delta_m^2/V_m(+1),\,\Delta_m^2/V_m(-1)\}\Big)
\label{eq:upper_bound_prob_wrong_verdict}
\end{equation}
where $\Delta_m:=\vect{w}^{\top}\vect{\mu}$ and $V_m:=\vect{w}^{\top}\bm{\Sigma}\,\vect{w}$ is the variance proxy of the weighted residuals (ref. Appendix~\ref{app:proof_upper_bound_prob_wrong_verdict}).


Eq.~\ref{eq:upper_bound_prob_wrong_verdict} implies that tightening the misclassification bound amounts to maximizing
$\Xi = \frac{(\vect{w}^\top \vect{\mu})^2}{\vect{w}^\top \vect{\Sigma}\vect{w}}$.
This perspective suggests a high-level prescription: 
\begin{enumerate}
    \item \textbf{Decomposition}: Decompose broad rubrics into finer dimensions to enhance coverage and discrimination.
    \item \textbf{Positivity}: Remove misaligned rubrics to eliminate negative edge and maintain constructivity.
    \item \textbf{Non-redundancy}: Prune redundant rubrics to ensure distinct, non-overlapping criteria.
    \item \textbf{Weight Optimization}: Prevent over-representation of highly correlated rubrics via weight optimization.
\end{enumerate}

In tandem, (1) and (2) expand the rubric set by admitting new criteria with positive edge ($\mu_k > 0$) to maximize aggregate edge, while (3) and (4) minimize redundancy and correlation within the denominator. Together, these yield an exponentially decaying misclassification probability and formalizes our three desiderata.
%


\subsection{Methodology: Recursive Rubric Decomposition (\ourmethod)}\label{subsec:rrd_framework} 

\input\begin{figure}[t]
  \centering
  \includegraphics[width=0.9\textwidth]{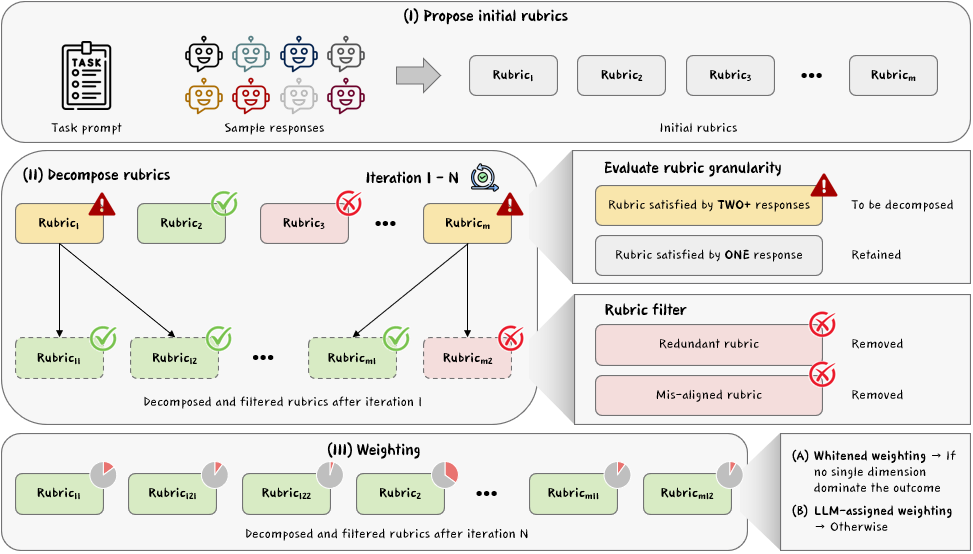}
  \caption{\textbf{Overview of \ourmethod framework}: \ourmethod consists of three stages: \textbf{(I) Initial Rubric Proposal.} LLM proposes initial candidate rubrics (conditioned on the task prompt and sample responses) for optimization. \textbf{(II) Recursive Decomposition and Filtering.} Recursively decompose coarse rubric into finer dimensions to enhance coverage and discrimination, while filtering misalgined and redundant rubrics. The cycle stops when the number of discarded rubrics exceeds $N$, indicating saturation in novel, non-redundant, and valid rubrics. \textbf{(III) Rubric Weight Assignment.} For open-ended tasks where preference signal is distributed, assign whitened uniform (WU) weights to account for correlation structure and prevent over-representation of highly correlated rubrics. Otherwise, assign LLM-proposed heuristic weights. Empirically, WU weighting yields higher LLM judge accuracy and improves the effectiveness of rubrics as generative rewards in RFT.}  
  \label{fig:my-image}
\end{figure}


Building on the theoretical insights in \S\ref{subsec:theoretical_insights_rrd}, we optimize Eq.\ref{eq:upper_bound_prob_wrong_verdict} through four pillars: \emph{decomposition}, \emph{positivity}, \emph{non-redundancy}, \emph{weight optimization}. Now, we introduce \emph{Recursive Rubric Decomposition} (\ourmethod), a principled rubric construction framework. 
The full \ourmethod procedure is summarized in Figure \ref{fig:my-image} and Algorithm \ref{alg:rrd}.


\ourmethod consists of three stages. First, we prompt an LLM to
propose initial rubrics conditioned on the task prompt and $m$ sample responses (we set $m=8$ in all experiments), yielding candidate rubrics for optimization.

Second, we start the recursive decomposition and filtering cycle. We exploit the fact that distinct responses must differ in some respects: a rubric satisfied by many responses is too broad-brush and insufficiently discriminative. It can be decomposed into finer sub-dimensions that capture more nuanced aspects of quality. We operationalize this by instructing an LLM-based rubric generator to recursively decompose any rubric that applies to more than $n$ rollouts (we use $n=2$ in all experiments, triggering decomposition whenever a rubric matches more than two candidate responses and leaving minimal room for under-decomposition). We then apply two filters: (a) misalignment filtering: which discards rubrics that prefer outputs from a weaker model (Llama3-8B) over a stronger model (GPT-4o) as a proxy for incorrect preference direction (more discussion about this in Appendix \ref{app:weak_strong_discussion}); and (b) LLM-based redundancy filtering, which removes rubrics that are substantially overlapping with existing ones.

We repeat this decomposition--filtering loop until the proposer struggles to produce novel, valid, non-redundant items. For efficiency, we use an early-stopping criterion: if the number of accumulated rejected proposals exceeds a termination threshold, we stop the loop, since further iterations are unlikely to yield effective new rubrics. We treat this termination threshold as a tunable hyperparameter. We use $15$ in all of our experiments below. The resulting rubric set is task-adaptive: its size emerges from the intrinsic complexity of the prompt rather than being fixed \emph{a priori} (More discussion about this in Appendix \ref{app:weak_strong_discussion}).

Finally, optimize weights to prevent over-representation of highly correlated rubrics. In practice, this is particularly challenging because the ground-truth preference labels needed to estimate the rubric edges $\vect{\mu}$ are unavailable. Prior work therefore instructs an LLM to assign rubric weights using a heuristically chosen scale \citep{gunjal2025rubrics, viswanathan2025checklists, zhang2025chasing}. This strategy is effective only if there exists a clearly \emph{dominant} rubric and the LLM can reliably identify it, so that ${(\vect{w}^\top \vect{\mu})^2}$ dominates the covariance term, effectively suppressing the impact of rubric correlations. In open-ended tasks, quality is inherently multi-dimensional, subjective, and nuanced, so the class-separating signal is often \emph{distributed} across many criteria rather than concentrated in single dominant rubric. This helps explain why existing rubric-based judges remain imperfect in practice, and why expanding coverage with additional valid rubrics further disperses the signal.

To sidestep the need for labeled edges, we instead minimize the misclassification upper bound by \emph{homogenizing signal in a whitened space}. This is feasible because the second-order redundancy structure of rubric scores, captured by $\vect{\Sigma}$, is an intrinsic property of the rubric set, not of any particular response pair, and can therefore be estimated from unlabeled data (ref. Appendix~\ref{app:weighting_in_whitened_space}). Concretely, we choose weights that \emph{whiten} the rubric space via $\Sigma^{-1/2}$ (i.e., removing correlations while applying equal weighting in the whitened coordinates). This yields a simple, label-free, and correlation-aware weighting scheme that stabilizes aggregation when signals are spread across many rubric dimensions.

\section{\ourmethod-based LLM Judge Results} \label{sec:judge_performance}
In \S\ref{sec:rrd_framework}, we introduced \ourmethod as a novel and theoretically grounded framework for rubric generation. In this section, we empirically evaluate the effectiveness of \ourmethod in improving the accuracy of LLM judges on open-form pairwise judgment tasks.

\paragraph{Dataset.} 
We evaluate the accuracy of LLM judge using 
(1) JudgeBench~\citep{tan2024judgebench}, which consists of challenging open-form response pairs spanning knowledge, reasoning, mathematics, and coding tasks. Following~\citet{tan2024judgebench}, we report results on the subset where both responses are generated by GPT-4o (350 preference pairs), and
(2) Preference Proxy Evaluations (PPE)~\citep{frick2024evaluate}, a large-scale benchmark containing 10.2K human preference pairs from Chatbot Arena covering 20 LLMs across 121+ languages. 

\paragraph{Baselines.}
We compare \ourmethod against the base model (preference labeling without explicit rubrics) and several rubric-based judge baselines that differ in their rubric generation strategies. Specifically, we include:
(1) LLM Rubrics (w/o resp.): rubrics are proposed based solely on the prompt;
(2) LLM Rubrics (w. resp.): rubrics are generated with access to sample responses for better grounding;
(3) Chasing the Tail~\citep{zhang2025chasing}: a state-of-the-art iterative method that optimizes rubrics to differentiate high-quality response pairs; and
(4) \ourmethod variants: we evaluate three weighting schemes -- $\ourmethod_\text{uniform}$ (uniform weights), $\ourmethod_\text{LLM}$ (LLM-assigned weights), and $\ourmethod_\text{UW}$ (whitened weights).

We employ GPT-4o~\citep{hurst2024gpt} and Llama-3.1-405B~\citep{dubey2024llama} as rubric generator and final judge, and consistently use GPT-4o and Gemini 2.5-pro~\citep{comanici2025gemini} as sample response generator (each model generates 4 sample responses). 

\begin{figure}[t]
  \centering

  \begin{subfigure}[t]{0.6\textwidth}
    \vspace{0pt}
    \centering
    
    {\setlength{\tabcolsep}{4pt}%
     \renewcommand{\arraystretch}{0.95}%
     \scalebox{0.85}{%
      \begin{tabular}{lcccc}
       
        \toprule
         &
        \multicolumn{2}{c}{\textit{GPT-4o}} &
        \multicolumn{2}{c}{\textit{Llama3.1-405B}} \\
         \cmidrule(lr){2-3}\cmidrule(lr){4-5}
        \textbf{Method} & \textbf{JudgeBench} & \textbf{PPE} & \textbf{JudgeBench} & \textbf{PPE} \\
        \midrule
        Base Model & 55.6 & 67.1 & 57.4 & 66.9 \\
        LLM Rubrics (w/o resp.) & 42.9 & 55.2 & 41.5 & 56.7 \\
        LLM Rubrics (w.\ resp.) & 66.8 & 64.7 & 56.0 & 65.9 \\
        Chasing the Tail & 67.6 & 66.4 & 56.9 & 66.3 \\
        $\ourmethod_\text{uniform}$ & 70.0 & 71.2 & 61.1 & 68.2 \\
        $\ourmethod_\text{LLM}$ & 72.4 & 72.6 & 63.5 & 69.7 \\
        \rowcolor{verylightgray} $\ourmethod_\text{WU}$ & \textbf{73.3} & \textbf{73.7} & \textbf{64.8} & \textbf{70.9} \\
        \bottomrule
      \end{tabular}%
     }
    }
    \subcaption*{(a)}
  \end{subfigure}\hfill
  \begin{subfigure}[t]{0.39\textwidth}
    \vspace{0pt}
    \centering
    \includegraphics[width=\linewidth]{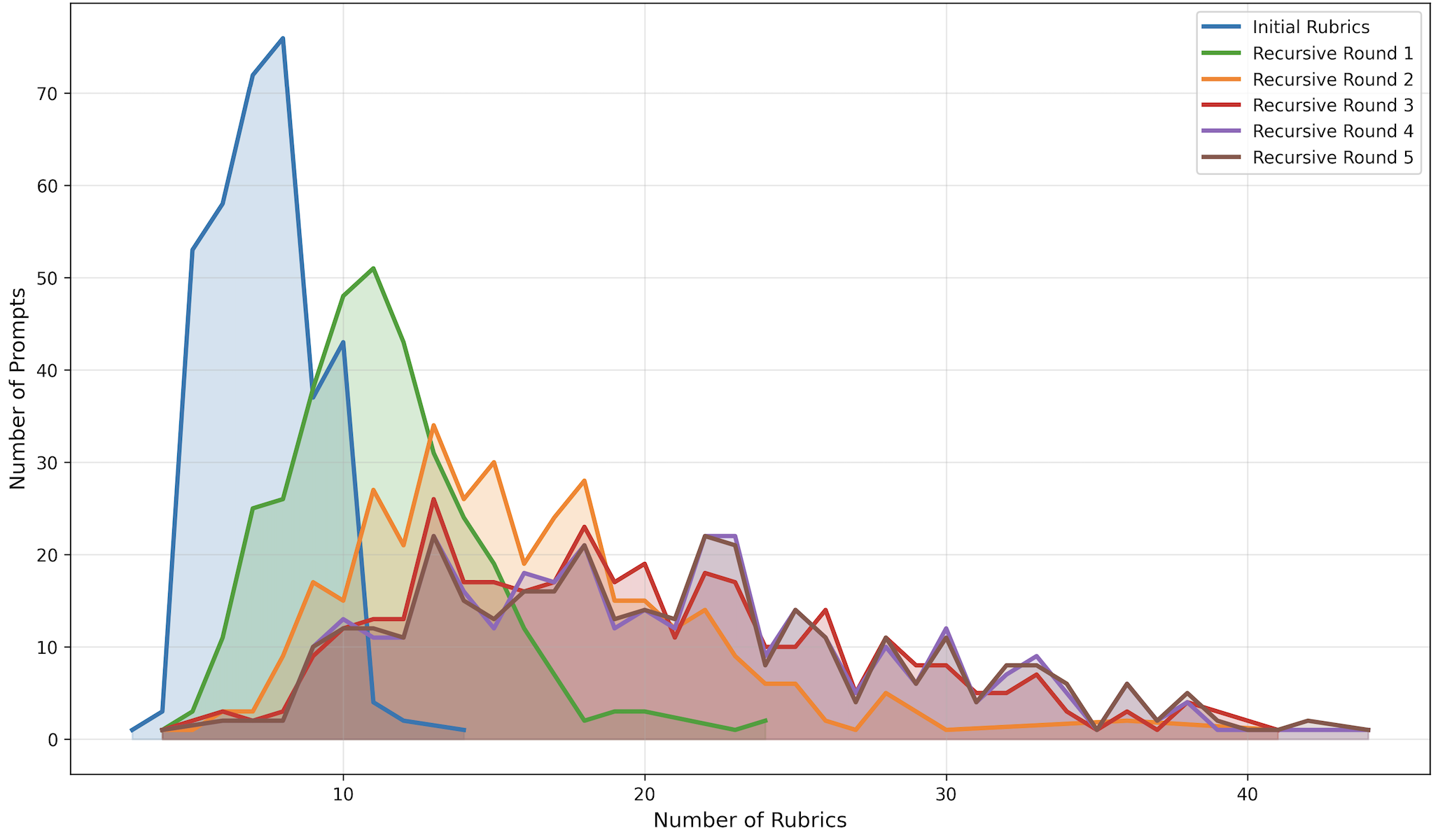}
    \subcaption*{(b)}
  \end{subfigure}

  \caption{
  (a) Accuracy on JudgeBench and PPE Preference datasets for base model and rubric-assisted judges under different rubric-generation strategies. While basic LLM-generated rubrics (unconditioned on sample responses) can degrade performance, \ourmethod yields consistent improvements over the baselines. Notably, $\ourmethod_\text{WU}$ delivers the largest gains and scales reliably across both proprietary (GPT-4o) and open-weights (Llama3.1-405B) judges.
  (b) Rubric-count dynamics on JudgeBench. Starting from an average of $7.4$ rubrics, the count rises to $\sim 20$, while the increasing variance across tasks indicates that the recursive procedure adapts evaluation depth to instance complexity.
  }
  \label{fig:table_and_plot}
\end{figure}
\subsection{Results}\label{subsec:rrd_based_judge_results}

We summarize the primary performance results in Figure~\ref{fig:table_and_plot}(a) and analyze the dynamics of the recursive generation process in Figure~\ref{fig:table_and_plot}(b). 

As shown in Figure~\ref{fig:table_and_plot}(a), $\ourmethod$ variants consistently outperform baselines across both models and datasets. On JudgeBench, $\ourmethod_\text{WU}$ improves GPT-4o's accuracy from $55.6\%$ to $73.3\%$ ($+17.7$ points). Similarly, Llama-3.1-405B sees a $6.6$ point boost. This suggests that the recursive discovery of latent rubrics provides critical signal that the initial rubric sets miss in a single pass. Interestingly, simple LLM Rubrics (w/o resp.) actually degrade performance compared to the base model as LLM judge. This highlights the failure mode where generic rubrics introduce noise or distract the judge. \ourmethod mitigates this by grounding the rubrics in recursive residuals, ensuring they capture meaningful differences. Additionally, whitened weighting ($\ourmethod_\text{WU}$) consistently outperforms the uniform and LLM-assigned weighting. This validates our theoretical framework: for open-ended tasks involving multiple non-dominating dimensions, accounting for rubric correlation yields a more robust aggregation of judge ``votes'' than simple averaging or LLM-based self-weighting.

Figure~\ref{fig:table_and_plot}(b) shows the evolution of rubric counts across recursive rounds on JudgeBench. Two key phenomena emerge: first, the initial average of $7.4$ rubrics grows rapidly and plateaus at approximately $20$ by iteration 3, suggesting that \ourmethod captures a comprehensive set of dimensions quickly before reaching a ``saturation'' point in novelty. Second, the framework is inherently adaptive, increasing the variance in rubric counts as it progresses. Consequently, recursion terminates quickly for simpler tasks but generates deeper criteria for complex tasks to resolve residual quality differences. A qualitative example of simpler tasks entailing fewer rubrics are provided in the Appendix~\ref{app:rubric_examples}.

\paragraph{Ablations.}
\begin{table}[t]
\centering

\begin{subtable}[t]{0.37\linewidth}
\centering
\scriptsize
\resizebox{\linewidth}{!}{%
    \begin{tabular}{l c c >{\columncolor{verylightgray}}c c}
        \toprule
        & \multicolumn{4}{c}{Termination Threshold} \\
        \cmidrule(lr){2-5}
        \textbf{Method} & \textbf{5} & \textbf{10} & \textbf{15} & \textbf{20} \\
        \midrule
        $\ourmethod$     & 68.6 & 67.4 & 70.0 & 66.3 \\
        $\ourmethod_\text{LLM}$ & 69.7 & 69.1 & 72.4 & 67.1 \\
        $\ourmethod_\text{WU}$  & 71.4 & 72.0 & 73.3 & 73.1 \\
        \bottomrule
    \end{tabular}%
}
\caption{}
\label{tab:side_a}
\end{subtable}
\hfill
\begin{subtable}[t]{0.60\linewidth}
\centering
\scriptsize
\resizebox{\linewidth}{!}{%
\begin{tabular}{l>{\columncolor{verylightgray}}c c c c}
    \toprule
    & \multicolumn{4}{c}{Sample Responses Generator} \\
    \cmidrule(lr){2-5}
    \textbf{Method} & \textbf{GPT+Gemini} & \textbf{Gemini Only} & \textbf{GPT Only} & \textbf{GPT+Llama-8B} \\
    \midrule
    \ourmethod     & 70.0 & 66.6 & 57.4 & 54.3 \\
    $\ourmethod_\text{LLM}$ & 72.4 & 71.0 & 61.4 & 60.0 \\
    $\ourmethod_\text{WU}$  & 73.3 & 72.4 & 62.3 & 60.3 \\
    \bottomrule
\end{tabular}}
\caption{}
\label{tab:side_b}
\end{subtable}

\caption{Ablation studies for \ourmethod on JudgeBench with GPT-4o as rubric proposer.
(a) Ablation of the termination threshold (i.e., the number of rubric proposals rejected due to redundancy or misalignment) for the \ourmethod process.
(b) Ablation of the sample response generation strategy, comparing (i) a combination of strong frontier models, (ii) a single strong model, and (iii) a mixture of strong and weaker models as inputs to the rubric proposer.}
\label{tab:judge_ablation}
\end{table}



To assess the impact of key design choices, we conduct ablation studies on JudgeBench, focusing on two key components: the termination threshold for recursion and the choice of sample response generator.


Table~\ref{tab:judge_ablation}(a) reports RRD variants under different termination thresholds, where the recursion stops after a fixed number of rubric proposals are rejected as redundant or misaligned. We observe that lower thresholds (e.g., $5$ or $10$) underperform, likely due to insufficient exploration of the rubric space. Raising the threshold to $20$ yields a plateau or slight drop for basic \ourmethod and $\ourmethod_\text{LLM}$, consistent with diminishing returns and increased noise from correlated criteria. In contrast, $\ourmethod_\text{WU}$ is notably robust: accuracy remains high even at threshold $20$, suggesting that whitening-uniform weighting effectively counteracts correlation-induced noise and stabilizes performance under deeper rubric exploration.


In Table~\ref{tab:judge_ablation}(b), we ablate the sample response generator by comparing multi-model ensembles with single-model and mixed-capability setups. Using two strong frontier models (GPT-4o + Gemini 2.5-Pro) performs best, consistently outperforming single-model baselines. Notably, when GPT-4o is the rubric proposer, it benefits more from conditioning on high-quality samples produced by a different frontier model (Gemini) than on its own outputs, suggesting that exposure to diverse reasoning styles and perspectives improves rubric generation. Meanwhile, replacing one frontier model with a smaller model (e.g., Llama3.1-8B) tends to reduce performance in \ourmethod setting, likely because decomposition benefits from high-quality samples that reveal when a rubric is overly coarse (i.e., satisfied by multiple strong responses) and should be refined.
\section{\ourmethod-based RFT} \label{sec:rft}

Previously in \S\ref{sec:judge_performance}, we showed that \ourmethod improves the accuracy of LLM judges on open-ended tasks by producing more comprehensive, discriminative and fine-grained rubrics. In this section, we ask whether these gains translate into learning gains when judges are used as generative reward models for reinforcement fine-tuning (RFT). RFT is the natural stress test for open-ended judging: even small systematic scoring biases can be amplified by the optimization loop, shaping model behavior in unintended ways. A reward model therefore must do more than standalone preference judges, it must provide stable, informative gradients that consistently promote the desired trade-offs across diverse prompts. We therefore use RFT to convert judge quality into an end-to-end signal, evaluating (i) reward reliability during training, and (ii) downstream performance of the resulting policies on in- and out-of-domain tasks.

\paragraph{Dataset and Training.} 
We conduct RFT on 4K English, non-toxic, de-duplicated prompts sampled from WildChat \citep{zhao2024wildchat}, representing natural user–AI interactions. We employ Dr.GRPO \citep{drgrpo} as our RFT algorithm for all settings (see details in Appendix~\ref{app:rft_training_details}).

Then after training, to demonstrate the performance of resulting models, we evaluate the resulting policies on two rubric-based, open-ended generation benchmarks: BiGGen Bench \citep{kim2024biggenbenchprincipledbenchmark} and HealthBench-Hard \citep{arora2025healthbench}. BiGGen Bench is a free-form generation benchmark spanning multiple core capabilities, and is evaluated with instance-specific criteria that capture what a good answer should contain for each prompt. HealthBench is a domain-specific benchmark containing 5K multi-turn clinical dialogues, scored using conversation-specific rubrics authored by physician experts, with importance-weighted criteria to reflect clinical priorities; the Hard split is designed to remain challenging and unsaturated.

We choose these two benchmarks because they provide complementary coverage of (i) in-distribution general assistant behavior and (ii) out-of-domain, high-stakes generalization. BiGGen Bench is closer to our training distribution (WildChat): both emphasize broad, real-world, open-ended assistant tasks, making BiGGen a natural test of whether RFT improves general helpfulness and capability across the kinds of prompts seen in the wild. In contrast, HealthBench-Hard focuses on clinical multi-turn dialogue with physician-defined notions of correctness, safety, and communication. It is substantially more domain-specific than open-domain chat, thereby stress-testing whether the learned policy maintain reliability in a high-stakes setting.

We report the \textit{macro-average score}, computed as the mean of per-example rubric scores
$s_i=\frac{\texttt{total\_awarded}_i}{\texttt{total\_possible}_i}$ (converted to a percentage), so that \textit{each dialogue/prompt contributes equally} regardless of how many rubric criteria it contains.
For HealthBench, we additionally \textit{clip the macro-average to $[0,100]$} after averaging, following the benchmark's recommended presentation to prevent rare negative-penalty cases from overly skewing the headline score.

\paragraph{Models}
We fine-tune Qwen3-4B \citep{yang2025qwen3} and Llama3.1-8B \citep{dubey2024llama} as policy models. GPT-4o serves as the rubric proposer and GPT-OSS-120B \citep{agarwal2025gpt} determines rubric satisfaction for rollout responses (or give direct preference judgment in ``LLM judge as reward'' baseline) during RFT.

\subsection{Results}\label{subsec:rrd_result}

\begin{figure}[t]
\centering
\begin{subfigure}{0.48\linewidth}
  \centering
  \includegraphics[width=\linewidth]{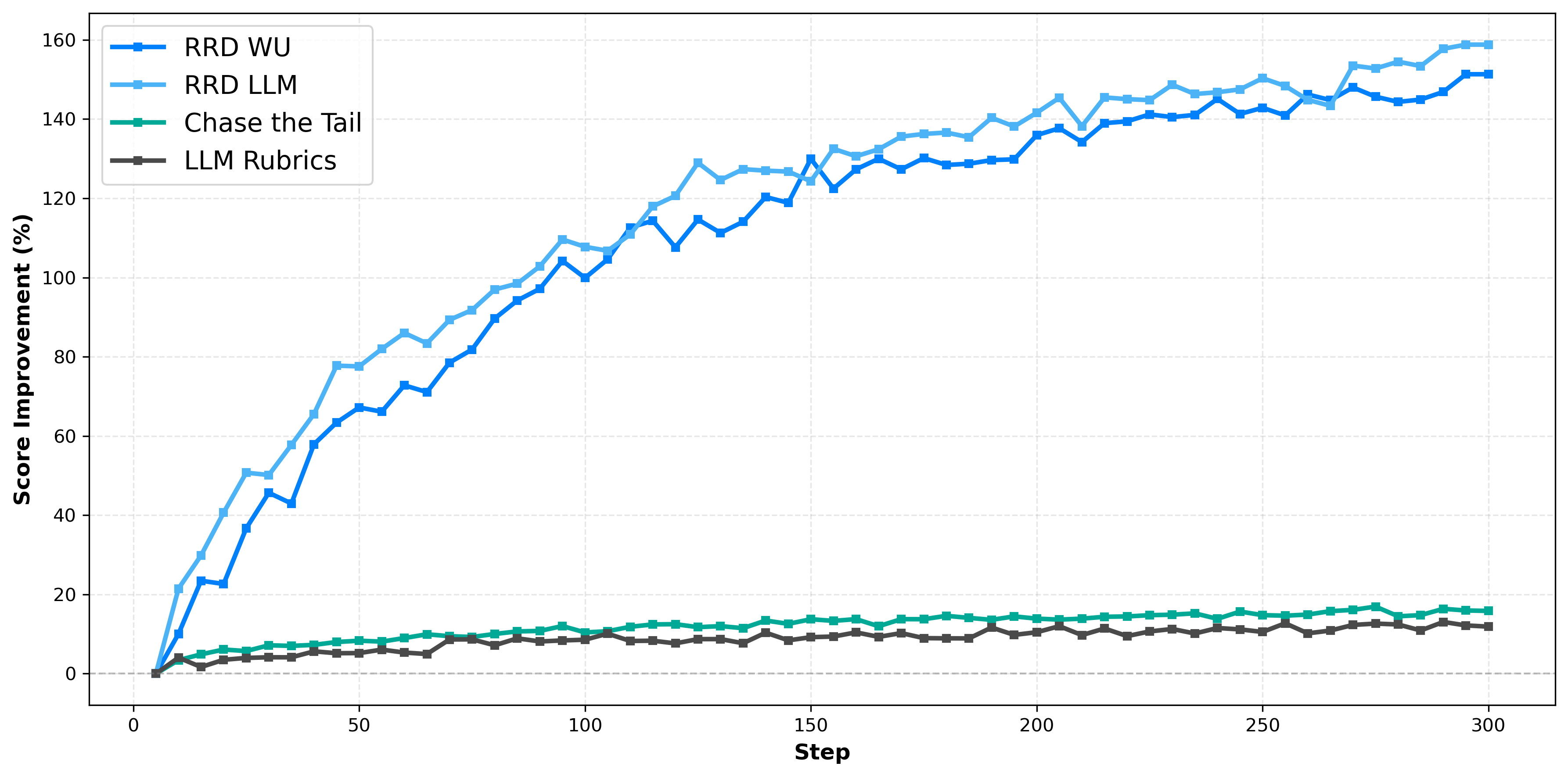}
\end{subfigure}%
\hfill
\begin{subfigure}{0.48\linewidth}
  \centering
  \includegraphics[width=\linewidth]{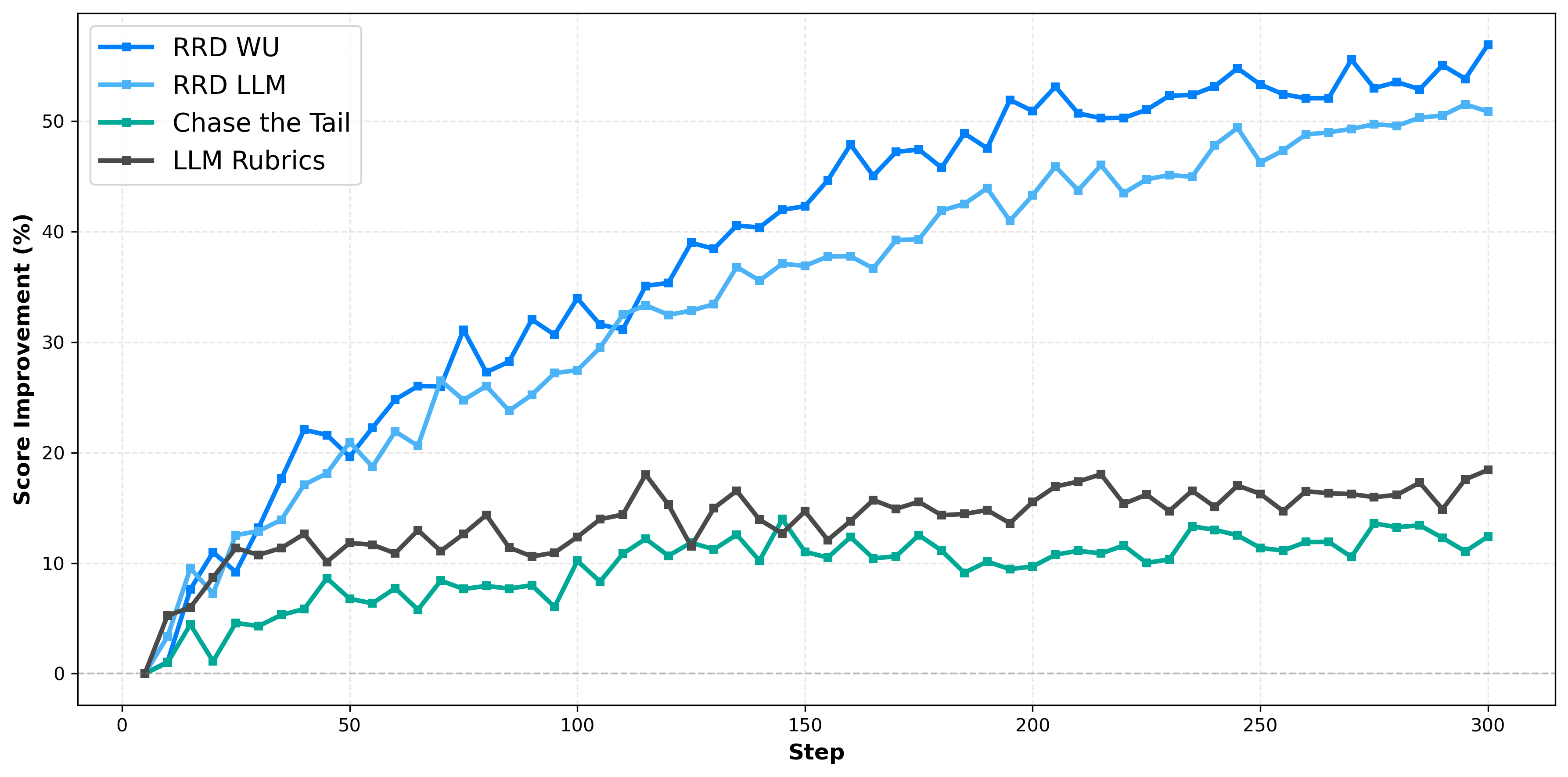}
\end{subfigure}
\caption{
Reward improvement during training of Qwen3-4B (left) and Llama3.1-8B-instruct (right) models using various rubric generation methods. Both $\ourmethod_\text{WU}$ and $\ourmethod_\text{LLM}$ provide a significantly stronger reward signal than traditional rubric-based or iterative baselines. $\ourmethod_\text{WU}$, in particular, shows superior training stability and higher cumulative reward gains across both architectures, indicating a more robust and granular supervision signal for RFT.
}
\label{fig:training_reward_curve}
\end{figure}
\input\begin{figure}[t]
  \centering
  \includegraphics[width=0.99\textwidth]{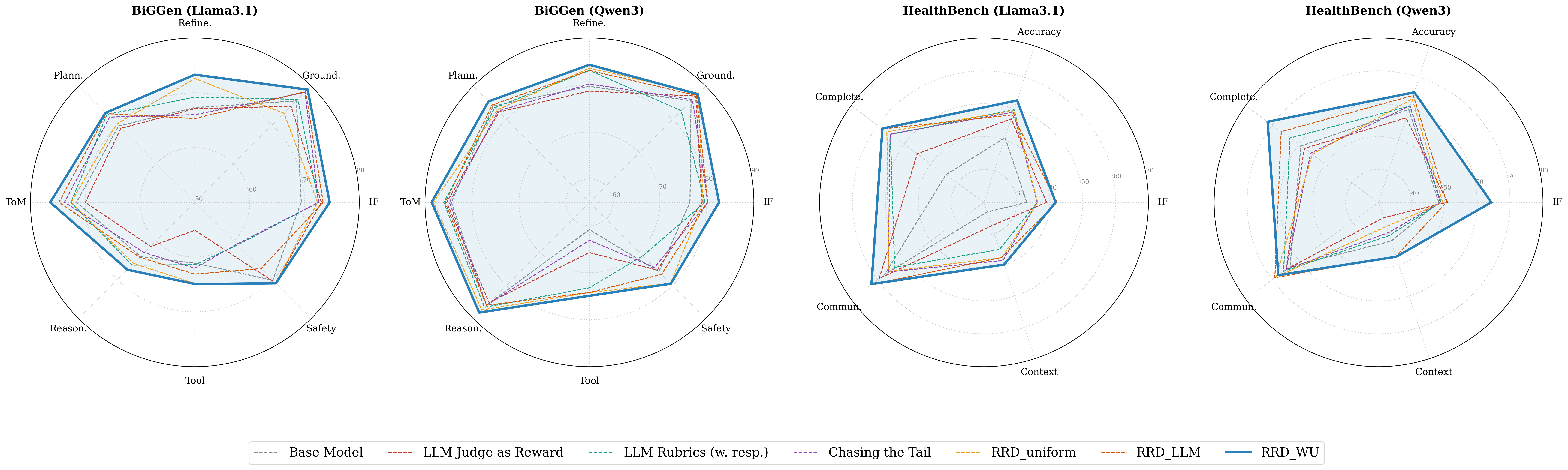}
  \caption{Multi-dimensional evaluation (scores in percentage) on BiGGen Bench (left) and HealthBench-Hard (right). Comparison of $\text{RRD}_{\text{WU}}$ against five baseline methods using Llama-3.1 and Qwen3 base models. $\text{RRD}_{\text{WU}}$ (solid red) demonstrates robust improvements across all axes, particularly in Instruction Following (IF) and Completeness.}
  \label{fig:radar}
\end{figure}
\paragraph{Reward Dynamics during Training.} 
Figure~\ref{fig:training_reward_curve} shows reward improvement during RFT for Qwen3-4B (left) and Llama3.1-8B-Instruct (right) under different rubric-generation strategies. Across both models, \ourmethod produces a substantially stronger learning signal: rewards climb rapidly in the first $\sim$50--100 steps and continue improving throughout training, while prior baselines plateau early at low gains. Concretely, on Qwen3-4B, $\ourmethod_{\mathrm{WU}}$ reaches roughly 150--160\% reward improvement by the end of training, compared to only $\sim$10--20\% for LLM Rubrics and Chasing the Tail; on Llama3.1-8B-Instruct, it attains about 55--60\% versus $\sim$10--20\% for baselines. Both RRD variants outperform alternatives, with $\ourmethod_{\mathrm{WU}}$ also showing the smoothest, most stable curves. Overall, the large absolute gaps (often $\geq$3--10$\times$ higher improvements) indicate that \ourmethod yields more discriminative and better-calibrated rewards, enabling sustained optimization rather than early saturation on open-ended tasks.

\paragraph{Policy Performance on BiGGen Bench.} The superior reward signal provided by \ourmethod translates into state-of-the-art performance for the resulting policies across diverse generation capabilities, as shown in the radar chart (Figure \ref{fig:radar}) with more details breakdown in Table~\ref{tab:biggenbench_main} (Appendix \ref{app:rft_training_res}). For both Qwen3-4B and Llama3.1-8B backbones, $\ourmethod_\text{WU}$ achieves the highest overall scores of $82.8\%$ and $71.1\%$, respectively. The method is particularly effective at improving instruction following (IF), refinement, and reasoning, all while maintaining robust performance on safety axes.

\paragraph{Generalization to High-Stakes Domains.} The performance is further confirmed by looking at the evaluation on the HealthBench dataset as shown in the same radar chart (Fig. \ref{fig:radar}) with more details breakdown in Table~\ref{tab:healthbench_main} (Appendix \ref{app:rft_training_res}) too. These results confirm that the benefits of \ourmethod transfer effectively to high-stakes, domain-specific tasks, such as the medical field. On HealthBench-Hard, $\ourmethod_\text{WU}$ consistently yields the best policy performance, particularly in IF, accuracy, and completeness ($+16.0\%$, $+5.5\%$, and $+12.5\%$ points respectively on the Qwen3-4B backbone). These results underscore the framework's ability to provide granular supervision that aligns with complex, physician-authored evaluation criteria.







\section{Related Works} \label{sec:related}

\paragraph{LLM-as-a-Judge.} 
Early work has shown that holistic judges, those producing a single verdict, can approximate human preferences with reasonable correlation~\citep{zheng2023judging, liu2023g, dubois2024length, wang2023pandalm, bavaresco2025llms}. However, such approaches suffer from bias~\citep{pezeshkpour2023large, saito2023verbosity}, inconsistency~\citep{zheng2023judging, haldar2025rating}, and opacity~\citep{kim2023prometheus, gajcin2025interpreting}, especially given the subjective, nuanced, and multidimensional nature of open-ended evaluation~\citep{thakur2024judging}. To address these limitations, recent work has shifted toward rubric-assisted judges~\citep{kim2023prometheus, hashemi2024llm, kim2025rubric, arora2025healthbench}, though many rely on static or heuristic rubrics that either lack scalability or prompt-specific nuance. In contrast, we propose a dynamic approach that recursively decomposes evaluation criteria to ensure both broad coverage and discriminative power.

\paragraph{Rubrics-based Rewards}
The use of structured rubrics has recently expanded beyond evaluation to serve as reward signals for training~\citep{gunjal2025rubrics}. To improve rubric quality, \citet{xie2025auto} introduce a Propose–Evaluate–Revise paradigm, while \citet{zhang2025chasing} optimize rubrics by maximizing score differentials between high-quality responses. Domain-specific approaches, such as \citet{wang2025infimed}, condition rubric generation on prompt context and retrieved exemplars, producing both positive and negative criteria. Other works leverage rubrics as priors for data generation or policy learning~\citep{huang2025reinforcement, zhou2025breaking}, or explore online rubric generation~\citep{rezaei2025online}. \ourmethod differs from prior work by introducing explicit mechanisms that enforce rubric \emph{informativeness}, \emph{comprehensiveness}, and \emph{non-redundancy} in LLM-generated criteria, and demonstrates its effectiveness as a source of high-quality RFT reward signals.



\paragraph{Reinforcement Fine-Tuning (RFT) for Open-Ended Tasks.} Reinforcement Learning from Verifiable Rewards (RLVR) has shown immense success in domains with objective ground truth, such as mathematics and coding, exemplified by models like DeepSeek-R1 and architectures utilizing process rewards~\citep{guo2025deepseek, su2025crossing, wen2025reinforcement}. Extending this success to open-ended tasks (e.g., creative writing, brainstorming) remains an open challenge due to the lack of ground-truth verifiers~\citep{zhang2025auditable, simonds2025self}. Recent works have attempted to bridge this gap by using LLM judges as proxy reward models for algorithms like GRPO. Our work contributes to this frontier by providing a ``Generative Verifier'' via \ourmethod that offers the granularity and reliability required to stabilize RFT in subjective and open-ended domains.
\section{Conclusion}
In this work, we introduced Recursive Rubric Decomposition (\ourmethod), a principled framework for generating informative, comprehensive and non-redundant rubrics for rubric-based LLM judges and reward modeling for open-ended tasks. \ourmethod operates by recursively decomposing high-level evaluation criteria into granular, discriminative rubrics, filtering for positive edge and non-redundancy to ensure reliability while optimizing weights to account for correlation structures and prevent the over-representation of highly correlated metrics. Empirically, we demonstrate that \ourmethod significantly improves the accuracy of LLM judges in a training-free setting, and establishes its effectiveness as a high-fidelity reward model for RFT. Policies trained with \ourmethod-derived rewards exhibit stronger alignment with human preferences on complex, open-ended generation tasks. These results highlight the importance of structured, granular, and statistically grounded rubric generation as a critical pathway toward scalable, interpretable, and reliable alignment for the next generation of LLMs.


\bibliographystyle{plainnat}
\bibliography{reference}

\appendix
\newpage
\section*{Appendix}
\label{sec:appendix}

\section{Algorithm}\label{app:algo}
\begin{algorithm}[H]
\caption{\textbf{RRD: Recursive Rubric Decomposition}}
\label{alg:rrd}
\begin{algorithmic}

\State \textbf{Require:} Let $\mathcal{P}$ be the prompt set; $\{R_i\}_{i=1}^{n} \in \mathcal{R}$ be sampled responses for each prompt $P \in \mathcal{P}$; $\Psi$ be the LLM-based rubric proposer. 

\State \textbf{Procedure:}
\For {each $P \in \mathcal{P}$:}
\State Let $\Psi$ generate an initial rubric list $\mathcal{G}_0 = \Psi(P, \{R_i\}_{i=1}^{n})$, 
\vspace{0.5em}

\State Initialize $\mathcal{G}_{\text{final}} \leftarrow \mathcal{G}_0$, iteration counter $t \leftarrow 0, |\mathcal{G}_{\text{filtered}}| \leftarrow 0$

\While{$|\mathcal{G}_{\text{filtered}}| < N$}
    \State $t \leftarrow t + 1$
    \For{each rubric $g_m \in \mathcal{G}_{t-1}$} \hfill \textit{Step 1: Rubric Evaluation}
        \State $\mathcal{R}_m \gets \{\, R_i \in \{R_i\}_{i=1}^{n} : g_m(P,R_i)=1 \,\}$

        
            \If{$|\mathcal{R}_m| \ge n$} \hfill \textit{Step 2: Rubric Decomposition}
                \State $\mathcal{G}^{\text{new}}_m \leftarrow \Psi(g_m, \mathcal{R}_m)$ 
                \State $\mathcal{G}_{t-1} \leftarrow \mathcal{G}_{t-1} \cup \mathcal{G}^{\text{new}}_m$
            \EndIf
        \EndFor
    \State $\mathcal{G}_{t}\gets \{\, m\in\mathcal{G}_{t-1}\mid \nexists\, m'\neq m:\ \mathrm{conflict}(m,)\vee \mathrm{overlap}(m,m') \,\}$  \hfill \textit{Step 3: Rubric Filter}

    \State $\mathcal{G}_{\text{final}} \leftarrow \mathcal{G}_{t}$
\EndWhile
\EndFor

\State \Return $\mathcal{G}_{\text{final}}$
\end{algorithmic}
\end{algorithm}

\newpage
\section{Additional Notes on Derivations and Proofs} \label{app:proofs}

\subsection{Probability Upper Bound for Incorrect Rubric-based Judge Verdict (Eq.\ref{eq:upper_bound_prob_wrong_verdict})}\label{app:proof_upper_bound_prob_wrong_verdict}

\textit{Derivation.}
Condition on the class $Y=y$, an error is $\{\widehat Y\neq Y\} \iff \{\Gamma\le 0\}$. Let $W_y=\mathbf{w}^\top Z$,
$$
\Pr(\Gamma\le 0\mid Y=y)
= \Pr(W_y \le -\Delta_m \mid Y=y).
$$

For a sub-Gaussian, it is trivial to show that 
\[
\Pr(X\le -a)=\Pr(e^{-\lambda X}\ge e^{\lambda a})\le \mathbb E[e^{-\lambda X}]e^{-\lambda a}\le \exp(\tfrac{\lambda^2\sigma^2}{2}-\lambda a).
\]

For $\lambda>0$, 
$$
\Pr(X\le -a)\ \le\ \exp\!\Big(-\frac{a^2}{2\sigma^2}\Big).
$$

Substitute $a$ with $\Delta_m$ and $\sigma^2$ with $=V_m(y)$, 
$$
\Pr(\Gamma\le 0\mid Y=y)
= \Pr(W_y \le -\Delta_m \mid Y=y)
\le \exp\!\Big(-\frac{\Delta_m^2}{2V_m(y)}\Big),
$$

Therefore, by the law of total probability,
$$
\Pr(\widehat Y\neq Y)
= \pi_+\Pr(\Gamma\le 0\mid Y=+1)+\pi_-\Pr(\Gamma\ge 0\mid Y=-1),
$$

Hence,
\begin{align*}
\Pr(\widehat Y\neq Y)
&\ \le\ \pi_+\,e^{-\Delta_m^2/(2V_m(+1))}\ +\ \pi_-\,e^{-\Delta_m^2/(2V_m(-1))} \\
&\ \le\ \max\!\Big\{e^{-\Delta_m^2/(2V_m(+1))},\ e^{-\Delta_m^2/(2V_m(-1))}\Big\} \\
&\ \le\ \exp\!\big(-\min\{\Delta_m^2/(2V_m(+1)),\,\Delta_m^2/(2V_m(-1))\}\big),
\end{align*}

\textit{Now, we complete the derivation for Eq.\ref{eq:upper_bound_prob_wrong_verdict}.} 

\vspace{2em}

\textit{Note.} As a side note following Eq.\ref{eq:upper_bound_prob_wrong_verdict}, the motivation for expanding the set of rubrics with positive information edge is clearer under the \textit{idealized case} where the rubrics are \textit{equicorrelated, equal-variance, equal-weight}.


Recall $V_m(Y)=\operatorname{Var}\!\left(\sum_{k=1}^m w_k Z_k \,\middle|\, Y\right)$. With $w_k=1$,
\begin{align*}
V_m(Y)&=\operatorname{Var}\!\Big(\sum_{k=1}^m Z_k \,\Big|\, Y\Big) \\
&= \sum_{k=1}^m \operatorname{Var}(Z_k\mid Y)\;+\;2\sum_{1\le i<j\le m}\operatorname{Cov}(Z_i,Z_j\mid Y).
\end{align*}
Plugging in the equicorrelation/equal-variance values:
\begin{align*}
\sum_{k=1}^m \operatorname{Var}(Z_k\mid Y)&= m\,\sigma^2, \\
\operatorname{Cov}(Z_i,Z_j\mid Y)&=\rho\,\sigma^2\ \ (i\neq j).
\end{align*}
There are $\binom{m}{2}=\frac{m(m-1)}{2}$ distinct pairs, so
\begin{align*}
V_m(Y) &= m\,\sigma^2 \;+\; 2\cdot \frac{m(m-1)}{2}\,\rho\,\sigma^2 \\
&= \sigma^2\big[m + (m^2-m)\rho\big].
\end{align*}
Under the symmetric assumptions of the corollary, this does not depend on $Y=+1$ or $Y=-1$, so we simply write
$$
V_m \;=\; \sigma^2\big[m + (m^2-m)\rho\big].
$$

As such, with the idealized assumptions of $w_k\equiv 1$, $\mu_k\equiv\mu>0$, $\mathrm{Var}(S_k\mid Y)=\sigma^2$, and $\mathrm{Corr}(S_i,S_j\mid Y)=\rho$ ($i\ne j$),
\[
\mathbb{P}(\widehat Y\ne Y)\ \le\ \exp\!\left(-\frac{m\,\mu^2}{2\,\sigma^2\,[1+(m-1)\rho]}\right).
\]

Following this, it becomes obvious that when rubrics are uncorrelated (i.e., $\rho$ = 0), the prediction error can be consistently reduced by consistently expanding the rubric pool, provided that each added rubric contributes a positive information edge (i.e., its prediction direction aligns with human preference). \textbf{This supports the recursive process to encourage more extensive rubric search for broader coverage}. In practice, however, rubrics are neither independent nor equicorrelated. When the exact information edge $\mu$ is \textit{unobservable} and \textit{individual evaluation dimensions carry comparable importance} (i.e., no single criterion dominating the overall judgment as often the case for open-ended tasks), \textbf{the aggregation of extended rubric set must explicitly account for the rubric correlation structure}. This motivates incorporating correlation-aware normalization via an effective weighting scheme, which we introduce below.

\subsection{Rubric Weighting in Whitened Space}\label{app:weighting_in_whitened_space}
Define the ratio to be optimized as:
\[
\Xi = {(\vect{w}^\top \vect{\mu})^2}/({\vect{w}^\top \vect{\Sigma} \vect{w}})
\]

Also define whitened coordinates:
\[
u \;:=\; \frac{\Sigma^{1/2}w}{\norm{\Sigma^{1/2}w}_2}\in\R^m, 
\qquad 
z \;:=\; \frac{\Sigma^{-1/2}\mu}{\norm{\Sigma^{-1/2}\mu}_2}\in\R^m, 
\qquad 
\kappa:=\norm{\Sigma^{-1/2}\mu}_2^2.
\]
\begin{lemma}\label{lem:cosine}
For any $w\neq 0$,
\[
\Xi(w)\;=\;\kappa\;\big\langle u,\,z\big\rangle^2
\qquad\text{and}\qquad
\frac{\Xi(w)}{\Xi(w^\star)} \;=\; \cos^2\!\angle\!\big(\Sigma^{1/2}w,\;\Sigma^{-1/2}\mu\big)\in[0,1].
\]
\end{lemma}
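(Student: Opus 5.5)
The plan is to rewrite both the numerator and the denominator of $\Xi$ in whitened coordinates. We assume $\Sigma$ is symmetric positive definite, so $\Sigma^{1/2}$ and $\Sigma^{-1/2}$ exist, are symmetric, and are mutually inverse. Positive definiteness is guaranteed by (A2) whenever the correlation structure is nondegenerate. If $\Sigma$ were only semidefinite we would restrict to its range, but we take strict positivity as the standing assumption. We also need $\mu\neq 0$ so that $z$ and $\kappa>0$ are well defined; this holds under (A1).

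The key identity is $w^\top\mu = (\Sigma^{1/2}w)^\top(\Sigma^{-1/2}\mu)$. It follows by inserting $I=\Sigma^{1/2}\Sigma^{-1/2}$ and using the symmetry of $\Sigma^{1/2}$. Likewise $w^\top\Sigma w=\|\Sigma^{1/2}w\|_2^2$, which is strictly positive for $w\neq 0$ since $\Sigma^{1/2}$ is invertible. Writing $a:=\Sigma^{1/2}w$ and $b:=\Sigma^{-1/2}\mu$, we get
\[
\Xi(w)=\frac{\langle a,b\rangle^2}{\|a\|_2^2}
=\|b\|_2^2\,\Big\langle \tfrac{a}{\|a\|_2},\tfrac{b}{\|b\|_2}\Big\rangle^2
=\kappa\,\langle u,z\rangle^2 .
\]
This is the first claim.

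For the ratio, we first identify $w^\star$ as a maximizer of $\Xi$. Since $u$ and $z$ are unit vectors, Cauchy--Schwarz gives $\langle u,z\rangle^2\le 1$, so $\Xi(w)\le\kappa$. Equality holds iff $u=\pm z$, i.e.\ $\Sigma^{1/2}w\propto\Sigma^{-1/2}\mu$, i.e.\ $w\propto\Sigma^{-1}\mu$. So we take $w^\star:=\Sigma^{-1}\mu$ (any nonzero multiple works, since $\Xi$ is scale-invariant) and obtain $\Xi(w^\star)=\kappa$. Then $\Xi(w)/\Xi(w^\star)=\langle u,z\rangle^2$. Because $u$ and $z$ are the normalizations of $\Sigma^{1/2}w$ and $\Sigma^{-1/2}\mu$, this equals $\cos^2\angle(\Sigma^{1/2}w,\Sigma^{-1/2}\mu)$, which lies in $[0,1]$ by the same Cauchy--Schwarz bound.

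The main obstacle is only one of bookkeeping. The lemma refers to $w^\star$ without defining it, so the proof must state explicitly that $w^\star\propto\Sigma^{-1}\mu$ is the maximizer and that $\Xi(w^\star)=\kappa$. It must also make the nondegeneracy assumptions explicit: $\Sigma\succ 0$ and $\mu\ne 0$. Every other step is a direct algebraic substitution.
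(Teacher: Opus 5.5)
Your proof is correct and follows the paper's route: the same whitening identities $w^\top\mu=\langle\Sigma^{1/2}w,\Sigma^{-1/2}\mu\rangle$ and $w^\top\Sigma w=\|\Sigma^{1/2}w\|_2^2$. Your Cauchy--Schwarz identification of $w^\star\propto\Sigma^{-1}\mu$ supplies the step the paper only asserts, namely $\Xi(w^\star)=\kappa$. One minor caveat: the pairwise bound $|\mathrm{Corr}(Z_i,Z_j)|\le\rho<1$ in (A2) does not by itself force $\Sigma\succ 0$, since a rubric can be an exact linear combination of others, so you are right to impose positive definiteness as a separate standing assumption rather than derive it from (A2).
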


\begin{proof}
By direct algebra:
$(w^\top\mu)^2 = \big\langle \Sigma^{1/2}w,\,\Sigma^{-1/2}\mu\big\rangle^2
= \norm{\Sigma^{1/2}w}_2^2\norm{\Sigma^{-1/2}\mu}_2^2\ip{u}{z}^2$,
and $w^\top\Sigma w=\norm{\Sigma^{1/2}w}_2^2$. Thus, $\Xi(w^\star)=\kappa$, where $w^\star$ is the optimized weight.
\end{proof}

\begin{lemma}
\label{lemma:rubric_covariance_convergence}
Let $X \in \R^m$ be a zero-mean rubric-score vector with covariance 
$\Sigma = \E[XX^\top] \succ 0$ (Note: $X$ is sub-Gaussian given bounded rubric scores in $[0,1]$
and $X_i \ for \ i \in [1,N]$ are i.i.d. copies of $X$).

Define the sample covariance 
\[
\widehat{\Sigma} := \frac{1}{N}\sum_{i=1}^N X_i X_i^\top,
\qquad
r_{\mathrm{eff}}(\Sigma) := 
\frac{\tr(\Sigma)}{\|\Sigma\|_{\mathrm{op}}} \in [1,m].
\]
Then there exist constants $c, \ C > 0$ such that, for all $t \ge 0$,
\[
\ \big\|\widehat{\Sigma} - \Sigma\big\|_{\mathrm{op}}
\ \le\
C\,\|\Sigma\|_{\mathrm{op}}
\!\left(
\sqrt{\frac{r_{\mathrm{eff}}(\Sigma)}{N}}
+\frac{r_{\mathrm{eff}}(\Sigma)}{N}
+\frac{t}{\sqrt{N}}
\right)
\quad\text{with probability at least } 1 - 2e^{-c t^2}.
\]
In particular,
\[
\E\,\|\widehat{\Sigma} - \Sigma\|_{\mathrm{op}}
\;\le\;
C\,\|\Sigma\|_{\mathrm{op}}\!
\left(
\sqrt{\frac{r_{\mathrm{eff}}(\Sigma)}{N}}
+\frac{r_{\mathrm{eff}}(\Sigma)}{N}
\right).
\]

Moreover, if $\lambda_{\min}(\Sigma) > 0$ and
$\|\widehat{\Sigma} - \Sigma\|_{\mathrm{op}} \le \tfrac{1}{2}\lambda_{\min}(\Sigma)$, then

\[
\big\|\widehat{\Sigma}^{-1/2} - \Sigma^{-1/2}\big\|_{\mathrm{op}}
\ \le\
\tfrac{1}{2}\,\lambda_{\min}(\Sigma)^{-3/2}\,
\big\|\widehat{\Sigma} - \Sigma\big\|_{\mathrm{op}},
\]

Thus whitened uniform weights computed from $\widehat{\Sigma}$ are consistent and numerically stable.
\end{lemma}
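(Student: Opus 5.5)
The proof has two parts: a covariance concentration bound, which I will import from the literature, and a deterministic perturbation bound for the inverse square root. For the first part, I will use that $X$ has bounded coordinates, hence is sub-Gaussian with $\psi_2$-norm controlled by $\|\Sigma\|_{\mathrm{op}}^{1/2}$. The clean route needs this control up to a constant; strictly it requires the stronger condition $\|\langle X,v\rangle\|_{\psi_2}\lesssim \|\Sigma^{1/2}v\|_2$ for all $v$, and the constant $C$ will absorb whatever ratio bounded entries yield. Then I will invoke the Koltchinskii--Lounici effective-rank bound for sample covariance (equivalently Vershynin, \emph{High-Dimensional Probability}, Thm.~9.2.4 / Ex.~9.2.5). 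It gives, with probability at least $1-2e^{-ct^2}$,
\[
\|\widehat\Sigma-\Sigma\|_{\mathrm{op}}\le C\|\Sigma\|_{\mathrm{op}}\Big(\sqrt{r_{\mathrm{eff}}/N}+r_{\mathrm{eff}}/N+t/\sqrt N\Big).
\]
If I had to prove this rather than cite it, I would run an $\varepsilon$-net argument on the sphere combined with Bernstein's inequality for the sub-exponential variables $\langle X_i,v\rangle^2-v^\top\Sigma v$; this gives the version with $m$ in place of $r_{\mathrm{eff}}$, and upgrading to $r_{\mathrm{eff}}$ needs generic chaining. The expectation bound then follows by integrating the tail over $t$: $\int_0^\infty 2e^{-ct^2}\,dt/\sqrt N=O(1/\sqrt N)$, and this term is dominated by $\sqrt{r_{\mathrm{eff}}/N}$ because $r_{\mathrm{eff}}\ge 1$.

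For the perturbation step, let $\lambda=\lambda_{\min}(\Sigma)$ and assume $\|\widehat\Sigma-\Sigma\|_{\mathrm{op}}\le\lambda/2$. By Weyl's inequality, $\lambda_{\min}(\widehat\Sigma)\ge\lambda/2$, so $\widehat\Sigma^{-1/2}$ is well defined. I will use the integral representation
\[
A^{-1/2}=\tfrac1\pi\int_0^\infty s^{-1/2}(A+s)^{-1}\,ds
\]
together with the resolvent identity $(\widehat\Sigma+s)^{-1}-(\Sigma+s)^{-1}=(\widehat\Sigma+s)^{-1}(\Sigma-\widehat\Sigma)(\Sigma+s)^{-1}$. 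Bounding the two resolvents by $(\lambda/2+s)^{-1}$ and $(\lambda+s)^{-1}$ gives
\[
\|\widehat\Sigma^{-1/2}-\Sigma^{-1/2}\|_{\mathrm{op}}\le\|\widehat\Sigma-\Sigma\|_{\mathrm{op}}\cdot\tfrac1\pi\int_0^\infty\frac{s^{-1/2}\,ds}{(\lambda/2+s)(\lambda+s)}=\frac{\|\widehat\Sigma-\Sigma\|_{\mathrm{op}}}{\sqrt{a b}(\sqrt a+\sqrt b)},
\]
where $a=\lambda/2$ and $b=\lambda$. This evaluates to about $0.83\,\lambda^{-3/2}\|\widehat\Sigma-\Sigma\|_{\mathrm{op}}$.

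This step is the main obstacle, because the constant $\tfrac12$ in the statement cannot be obtained. Already in the scalar case $\Sigma=\lambda$, $\widehat\Sigma=\lambda/2$, the left side equals $(\sqrt2-1)\lambda^{-1/2}\approx0.414\,\lambda^{-1/2}$, whereas the right side is only $\tfrac14\lambda^{-1/2}$. So the plan is to prove the bound with constant $\sqrt2\,(1+\sqrt2)^{-1}\cdot\sqrt2=2-\sqrt2\approx0.586$, or with any safe constant such as $\sqrt2$ (the value from a crude mean-value argument, $\sup_{x\ge\lambda/2}\tfrac12x^{-3/2}=\sqrt2\,\lambda^{-3/2}$), and to state that the $\tfrac12$ must be replaced. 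None of the downstream conclusions depend on the exact constant.

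The consistency claim for whitened uniform weights then follows by combining the two parts. Once $N\gtrsim r_{\mathrm{eff}}\,\|\Sigma\|_{\mathrm{op}}^2/\lambda^2$, the event $\|\widehat\Sigma-\Sigma\|_{\mathrm{op}}\le\lambda/2$ holds with high probability. The weights are a fixed linear image of $\widehat\Sigma^{-1/2}$ (applied to $\mathbf 1$ and normalized), so they converge at the rate $O\big(\lambda^{-3/2}\|\Sigma\|_{\mathrm{op}}\sqrt{r_{\mathrm{eff}}/N}\big)$. Their numerical stability follows from the lower eigenvalue bound $\lambda_{\min}(\widehat\Sigma)\ge\lambda/2$.
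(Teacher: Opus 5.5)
Your diagnosis of the perturbation step is right, and it exposes an error in the paper rather than in your approach. With the constant $\tfrac12\lambda_{\min}(\Sigma)^{-3/2}$ the inequality is false. Your scalar example is valid; indeed, in the scalar case every $\widehat\Sigma=(1-\epsilon)\lambda$ with $0<\epsilon\le\tfrac12$ violates it, since $(1-\epsilon)^{-1/2}-1>\epsilon/2$.

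The paper's proof applies $\|f(A)-f(B)\|_{\mathrm{op}}\le\sup_{t\ge\gamma}|f'(t)|\,\|A-B\|_{\mathrm{op}}$ with $f(t)=t^{-1/2}$ and $\gamma=\tfrac12\lambda_{\min}(\Sigma)$. It correctly obtains $\tfrac12\gamma^{-3/2}$, but then writes $\tfrac12\lambda_{\min}(\Sigma)^{-3/2}$ instead of $\tfrac12(\lambda_{\min}(\Sigma)/2)^{-3/2}=\sqrt2\,\lambda_{\min}(\Sigma)^{-3/2}$. So your fallback constant $\sqrt2$ is exactly what that argument proves.

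Your own sharp constant, however, has an arithmetic slip: $\sqrt2\,(1+\sqrt2)^{-1}\sqrt2=2/(1+\sqrt2)=2\sqrt2-2\approx0.828$, not $2-\sqrt2\approx0.586$. Your integral evaluation ($\approx0.83$) was correct. The value $2-\sqrt2$ is refuted by your own counterexample ($0.414>0.586\cdot\tfrac12$ in units of $\lambda^{-1/2}$). The sharp constant is $2\sqrt2-2$, attained by $\Sigma=\lambda$, $\widehat\Sigma=\lambda/2$.

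Otherwise your route differs from the paper's, and it is the sounder one.
\begin{itemize}
\item \emph{Concentration.} The paper uses an intrinsic-dimension matrix Bernstein inequality with variance proxy $CN\|\Sigma\|_{\mathrm{op}}^2r_{\mathrm{eff}}(\Sigma)$ and sub-exponential parameter $L=C\|\Sigma\|_{\mathrm{op}}$. But matrix Bernstein carries a dimensional prefactor, which costs a logarithm. Also, $\|X\|_2^2$ has $\psi_1$-norm at least of order $\tr(\Sigma)$, not $\|\Sigma\|_{\mathrm{op}}$. So that route does not deliver the log-free rate. The Koltchinskii--Lounici/Vershynin theorem you cite does, at the cost of the $\psi_2$--$L^2$ equivalence you flag; for bounded scores that ratio is distribution-dependent.
\item \emph{Perturbation.} The paper's general claim that any $C^1$ function is operator-Lipschitz with constant $\sup|f'|$ fails in general. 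Your resolvent integral proves what is needed for $t^{-1/2}$ rigorously. Because it keeps $\lambda_{\min}(\Sigma)$ in one resolvent, it also improves on $\sqrt2$.
\end{itemize}

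One detail to add concerns the tail versions of the effective-rank bound. These have deviation $\sqrt{u/N}+u/N$ at confidence $1-2e^{-u}$, so with $u=t^2$ an extra $t^2/N$ term appears. That term is absorbed by $t/\sqrt N$ only for $t\le\sqrt N$. For $t>\sqrt N$ the displayed form fails for Gaussian-tailed $X$ (take $m=N=1$), so you must use boundedness. Almost surely $\|\widehat\Sigma-\Sigma\|_{\mathrm{op}}\le\max\{\sup\|X\|_2^2,\|\Sigma\|_{\mathrm{op}}\}$, which makes the event trivial there once $C$ may depend on $\sup\|X\|_2^2/\|\Sigma\|_{\mathrm{op}}$. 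The expectation bound is unaffected.
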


\begin{proof}
Let $Y_i := X_i X_i^\top - \Sigma$, estimation error for or covariance $\Sigma$ is 
$
\widehat{\Sigma} - \Sigma 
= \frac{1}{N}\sum_{i=1}^N X_i^\top X_i - \Sigma
= \frac{1}{N}\sum_{i=1}^N Y_i
$ 
with $\E Y_i = 0$. 
To control the random fluctuation $\widehat{\Sigma} - \Sigma$ in operator norm, we will apply a matrix Bernstein inequality to the sum $\sum^N_{i=1} Y_i$. This requires bounding the matrix variance proxy
$
V := \Big\|\sum_{i=1}^N \E\,Y_i^2\Big\|_{\mathrm{op}}
$,
which captures the second-order size of the fluctuations.

Since
$Y_i^2 = (X_i X_i^\top - \Sigma)^2
= X_i X_i^\top X_i X_i^\top - X_i X_i^\top \Sigma - \Sigma X_i X_i^\top + \Sigma^2$,
we have
$
\E\,Y_i^2
= \E[XX^\top XX^\top] - \Sigma^2
$. 

Thus, for any unit $u$,
$
u^\top \E[XX^\top XX^\top]\,u
= \E[\|X\|_2^2\,(u^\top X)^2]
\le \sqrt{\E\|X\|_2^4}\,\sqrt{\E(u^\top X)^4}
$.
Sub-Gaussian moment bounds give
$\E\|X\|_2^4 \le C (\tr \Sigma)^2$ and
$\E(u^\top X)^4 \le C (u^\top \Sigma u)^2 \le C \|\Sigma\|_{\mathrm{op}}^2$, where $C$ is universal constant.
Plugging this back we get
\[
u^\top \E[XX^\top XX^\top]\,u
\le C\,\|\Sigma\|_{\mathrm{op}}\,\tr(\Sigma),
\qquad
\Rightarrow\quad
\big\|\E[XX^\top XX^\top]\big\|_{\mathrm{op}}
\le C\,\|\Sigma\|_{\mathrm{op}}\,\tr(\Sigma).
\]
Therefore,
\[
V = \Big\|\sum_{i=1}^N \E Y_i^2\Big\|_{\mathrm{op}}
\le N\,C\,\|\Sigma\|_{\mathrm{op}}\,\tr(\Sigma)
= C\,N\,\|\Sigma\|_{\mathrm{op}}^2\,r_{\mathrm{eff}}(\Sigma),
\]
where $r_{\mathrm{eff}}(\Sigma)={\tr(\Sigma)}/{\|\Sigma\|_{\mathrm{op}}}$.


To apply the non-commutative Bernstein
inequality, we also require a uniform sub-exponential bound on the size of the
summands, encoded by a parameter $L>0$ such that $\|Y_i\|_{\op}$ is
sub-exponential with $\|Y_i\|_{\psi_1} \le L$. Since
\[
\|Y_i\|_{\op}
= \|X_i X_i^\top - \Sigma\|_{\op}
\le \|X_i X_i^\top\|_{\op} + \|\Sigma\|_{\op}
= \|X_i\|_2^2 + \|\Sigma\|_{\op},
\]
and $X$ is sub-Gaussian, it follows that $\|Y_i\|_{\op}$ is sub-exponential with
$\|Y_i\|_{\psi_1} \le C \|\Sigma\|_{\op}$ for some universal $C$. With $V$ and $L=C \|\Sigma\|_{\op}$, we now invoke the intrinsic-dimension version of the matrix Bernstein inequality for the sum $\sum_{i=1}^N Y_i$ and have:
\[
\mathbb{P}\!\left(
\Big\|\frac{1}{N}\sum_{i=1}^N Y_i\Big\|_{\mathrm{op}} \ge s
\right)
\le
2\exp\!\left\{
-\,c\,\min\!\left(
\frac{N^2 s^2}{V},\, \frac{N s}{L}
\right)
\right\},
\]
where $V$ and $L$ are as above and $c>0$ is a universal constant.

Substituting $V \le C N \|\Sigma\|_{\mathrm{op}}^2 r_{\mathrm{eff}}(\Sigma)$ yields
\[
\mathbb{P}\!\left(
\big\|\widehat{\Sigma} - \Sigma\big\|_{\mathrm{op}} \ge s
\right)
\le
2\exp\!\left\{
-c N\,\min\!\left(
\frac{s^2}{C \|\Sigma\|_{\mathrm{op}}^2 r_{\mathrm{eff}}(\Sigma)},\,
\frac{s}{C \|\Sigma\|_{\mathrm{op}}}
\right)
\right\}.
\]
Choosing
$s = C\,\|\Sigma\|_{\mathrm{op}}
\!\left(\sqrt{r_{\mathrm{eff}}/N} + r_{\mathrm{eff}}/N + t/\sqrt{N}\right)$
gives, after rescaling constants,
\[
\big\|\widehat{\Sigma} - \Sigma\big\|_{\mathrm{op}}
\ \le\
C\,\|\Sigma\|_{\mathrm{op}}
\!\left(
\sqrt{\frac{r_{\mathrm{eff}}(\Sigma)}{N}}
+\frac{r_{\mathrm{eff}}(\Sigma)}{N}
+\frac{t}{\sqrt{N}}
\right),
\]
with probability at least $1 - 2e^{-c t^2}$.
Integrating this tail yields the stated expectation bound.

For symmetric positive definite $A, \ B$ with
$\lambda_{\min}(A), \lambda_{\min}(B) \ge \gamma > 0$
and any $C^1$ scalar function $f$ on $[\gamma,\infty)$,
the spectral calculus gives
\[
\|f(A) - f(B)\|_{\mathrm{op}}
\le
\sup_{t \in [\gamma, \infty)} |f'(t)|\, \|A - B\|_{\mathrm{op}}.
\]
Applying this with $f(t) = t^{-1/2}$ 
yields
$\sup_{t \ge \gamma} |f'(t)| = \tfrac{1}{2}\gamma^{-3/2}$. 
Taking $A = \widehat{\Sigma}$, $B = \Sigma$, and
$\gamma = \tfrac{1}{2}\lambda_{\min}(\Sigma)$
(valid when $\|\widehat{\Sigma} - \Sigma\|_{\mathrm{op}}
\le \tfrac{1}{2}\lambda_{\min}(\Sigma)$) gives
\[
\big\|\widehat{\Sigma}^{-1/2} - \Sigma^{-1/2}\big\|_{\mathrm{op}}
\le
\tfrac{1}{2}\lambda_{\min}(\Sigma)^{-3/2}
\big\|\widehat{\Sigma} - \Sigma\big\|_{\mathrm{op}},
\]
\end{proof}

\begin{theorem} \label{thm:wu}
Let the weight space be
$
\mathcal{W}:=\bigl\{\,w\in\mathbb{R}^m:\ \left\lVert {\Sigma}^{1/2}w \right\rVert_2=1\,\bigr\}
$.
Then
\[
\sup_{w\in\mathcal{W}}\ \inf_{\mu\in\mathcal{U}}\ \mathrm{SNR}(w)
\quad\text{is attained by}\quad
w_{\text{wu}}\ \propto\ {\Sigma}^{-1/2}\mathbf{1}
\]  
\end{theorem}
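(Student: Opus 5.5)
The plan is to reduce the max--min problem to a symmetric problem in whitened coordinates and solve that in closed form. Since the statement does not define $\mathcal{U}$ and $\mathrm{SNR}$, I fix them as the paper's prescription (``homogenizing signal in a whitened space'') suggests. I take $\mathrm{SNR}(w):=\Xi(w)=(w^\top\mu)^2/(w^\top\Sigma w)$. I take $\mathcal{U}$ to be the set of edge vectors whose whitened edges are nonnegative with fixed total mass:
\[
\mathcal{U}:=\bigl\{\mu=\Sigma^{1/2}v:\ v\in\mathbb{R}^m_{\ge 0},\ \mathbf{1}^\top v=1\bigr\}.
\]
The scale normalization is immaterial, and nonnegativity reflects the positivity filter (A1). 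Under this reading the key object is the change of variables from Lemma~\ref{lem:cosine}: set $u=\Sigma^{1/2}w$ and $v=\Sigma^{-1/2}\mu$. Then $w^\top\mu=u^\top v$ and $w^\top\Sigma w=\|u\|_2^2$. The constraint $w\in\mathcal{W}$ becomes exactly $\|u\|_2=1$, so $\mathrm{SNR}(w)=(u^\top v)^2$. Because $\Sigma\succ 0$, the map $w\mapsto\Sigma^{1/2}w$ is a bijection between $\mathcal{W}$ and the unit sphere $S^{m-1}$.

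\emph{Inner problem.} For fixed unit $u$, I compute $\inf_{v\in\Delta_m}(u^\top v)^2$, where $\Delta_m$ is the simplex. The function $v\mapsto u^\top v$ is linear on $\Delta_m$, so its range is $[\min_k u_k,\max_k u_k]$. If $u$ has entries of both signs, this range contains $0$ and the infimum is $0$. Otherwise, replacing $u$ by $-u$ (which leaves the SNR unchanged), I may assume $u\ge 0$, and then the infimum is $(\min_k u_k)^2$. So the outer problem becomes: maximize $\min_k u_k$ over $u\ge 0$ with $\|u\|_2=1$.

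\emph{Outer problem.} Let $a=\min_k u_k$. Then $1=\|u\|_2^2\ge m a^2$, so $a\le 1/\sqrt m$. Equality holds iff every $u_k=1/\sqrt m$, that is, $u=\mathbf{1}/\sqrt m$. The max--min value is therefore $1/m$, attained uniquely up to sign at $u^\star=\mathbf{1}/\sqrt m$. Mapping back gives
\[
w_{\mathrm{wu}}=\Sigma^{-1/2}u^\star=\Sigma^{-1/2}\mathbf{1}/\sqrt m\ \propto\ \Sigma^{-1/2}\mathbf{1}.
\]
This lies in $\mathcal{W}$ because $\|\Sigma^{1/2}w_{\mathrm{wu}}\|_2=1$. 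As a cross-check, Lemma~\ref{lem:cosine} interprets the result: the equal-weight whitened direction maximizes the worst-case cosine against an unknown homogeneous whitened edge.

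The main obstacle is not the computation but pinning down $\mathcal{U}$ so that the statement is true and nontrivial. If $\mathcal{U}$ were all positive edges $\mu$, rather than positive whitened edges, the infimum could be driven to $0$ for every $w$, and the theorem would be vacuous. I would therefore state the uncertainty set explicitly as above. As a robustness remark, I would note that any permutation-invariant convex $\mathcal{U}$ in whitened coordinates gives the same answer. The reason is that $u\mapsto\inf_{v}|u^\top v|$ is then permutation-invariant, and on the relevant orthant it is an infimum of linear functions and hence concave. Averaging any maximizer over coordinate permutations therefore does not decrease the objective, and renormalizing only helps. So a maximizer proportional to $\mathbf{1}$ exists. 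Finally, Lemma~\ref{lemma:rubric_covariance_convergence} justifies replacing $\Sigma$ by $\widehat\Sigma$ in practice.
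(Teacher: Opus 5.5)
Your proof is correct, and it takes a different and more explicit route than the paper's. Both proofs start from the whitening reduction of Lemma~\ref{lem:cosine}. The paper then takes the whitened edge to be nonnegative, exchangeable, and of fixed $\ell_2$ norm, and asserts that ``by symmetry and convexity of the feasible set of $z$'' the worst-case-optimal $u$ is the barycenter $\mathbf{1}/\sqrt{m}$. It never computes the inner infimum. The convexity it appeals to is also not what drives the result: the normalized feasible set of $z$ (unit sphere intersected with the orthant) is not convex, and what matters is concavity of the worst-case objective in $u$.

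You instead compute the inner infimum exactly as $(\min_k u_k)^2$ and close the outer problem with $m(\min_k u_k)^2\le\|u\|_2^2=1$. This also gives the optimal value $1/m$ and uniqueness up to sign, which the paper's sketch does not deliver.

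Your $\ell_1$ normalization costs nothing relative to the paper's $\ell_2$ one. For $u\ge 0$ and $z\ge 0$ with $\|z\|_2=1$ one has $u^\top z\ge\min_k u_k\,\|z\|_1\ge\min_k u_k$, with equality at a basis vector. A mixed-sign $u$ gets worst-case value $0$, because that set is connected and contains every $e_k$. So the same computation proves the theorem in the paper's own setting.

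Your permutation-averaging remark is the rigorous form of the paper's symmetry heuristic, with two corrections. Convexity of $\mathcal{U}$ is not needed, since $u\mapsto\inf_{v\in V}u^\top v$ is concave for any set $V$; this is what lets the remark cover the paper's nonconvex set. The restriction to the orthant should be justified by $|u^\top v|\le|u|^\top v$ for $v\ge 0$, with $|u|$ taken entrywise.

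One aside in your write-up is wrong. If $\mathcal{U}$ consisted of normalized positive edges in the original coordinates, the problem would not be vacuous. For $\mu$ ranging over the simplex, the worst case at $w\propto\mathbf{1}$ is $1/(\mathbf{1}^\top\Sigma\mathbf{1})>0$. What changes is the maximizer: for $\Sigma=\mathrm{diag}(1,4)$ it is $w\propto\mathbf{1}$, not $\Sigma^{-1/2}\mathbf{1}=(1,\tfrac12)^\top$. So the theorem would be false rather than vacuous, which is a stronger reason to state the whitened-nonnegativity assumption explicitly. That assumption is imposed, both by you and by the paper; it is not implied by (A1), which constrains $\mu$ rather than $\Sigma^{-1/2}\mu$.
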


\begin{proof}
Assume whitened edge $\Sigma^{-1/2}\mu$ is nonnegative and exchangeable in distribution across coordinates. We fix its magnitude $\norm{\Sigma^{-1/2}\mu}_2^2=c>0$ but leave its direction unknown.

By Lemma~\ref{lem:cosine}, $\Xi(w)=c\,\ip{u}{z}^2$ with $u=\Sigma^{1/2}w/\norm{\Sigma^{1/2}w}_2$ and $z=\Sigma^{-1/2}\mu/\norm{\Sigma^{-1/2}\mu}_2$.
Under the assumptions, $z$ is supported on the positive orthant and is exchangeable with fixed norm; the \emph{least favorable} $z$ against a given $u$ is the one minimizing $\ip{u}{z}^2$.
By symmetry and convexity of the feasible set of $z$, the $u$ that maximizes the worst-case inner product is the barycenter of the positive orthant, i.e., $u\propto \1$. 
Translating back gives $w\propto \Sigma^{-1/2}\1$. 
\end{proof}

\section{Discussion}
\label{app:weak_strong_discussion}
\paragraph{Misalignment filtering discussion.}
We include an directionality guardrail that flags rubrics whose induced preferences systematically favor a weaker reference model over a stronger one. This heuristic is motivated by prior evidence that \emph{strong} LLM judges can, on average, closely track human pairwise preferences on open-ended prompts (e.g., GPT-4 achieving human-comparable agreement in MT-Bench/Chatbot Arena; \citep{zheng2023judging}), and that structured LLM-based evaluation frameworks can substantially improve correlation with human judgments in NLG evaluation \citep{liu2023g}. More broadly, using a more capable model as a source of supervision is a standard alignment pattern (e.g., AI feedback replacing or complementing human labels in Constitutional AI and related RLAIF setups; \citep{bai2022constitutional,lee2023rlaif}). Importantly, we do \emph{not} treat “stronger model preferred” as a normative definition of quality: the heuristic can fail on value-sensitive axes where humans may prefer caution, brevity, calibrated uncertainty, or certain refusal behaviors. Accordingly, we treat the filter as a conservative sanity-check rather than a core dependency of RRD, and recommend disabling it in such domains or replacing it with (i) a multi-reference check across several strong models and/or (ii) axis-specific exemptions/calibration for known inversion-prone criteria.

\paragraph{Controlling over-decomposition.}
A potential failure mode of recursive rubric refinement is \emph{over-decomposition}: when the decomposition trigger is aggressive (e.g., small coverage threshold), the procedure may fragment criteria into overly specific sub-rubrics that track incidental artifacts of the sampled responses (e.g., stylistic quirks) rather than stable preference dimensions. To mitigate this, RRD includes two complementary safeguards. First, we apply a \emph{non-redundancy} filter that removes candidate rubrics that are duplicative, conflicting, or near-paraphrases of existing ones, preventing correlated or semantically equivalent sub-rubrics from accumulating and effectively ``double-counting'' the same dimension. Second, we impose a \emph{rejection-based early stopping} criterion by setting up a tunable hyperparameter \emph{termination threshold}:  when a decomposition round produces too many invalid or non-novel candidates (exceeding a rejection threshold), we halt further recursion. Together, these mechanisms bound rubric set growth and reduce the risk that continued recursion degenerates into overly fine-grained, sample-specific criteria, while preserving the benefits of decomposition when it reveals genuinely discriminative sub-dimensions.

\newpage
\section{RFT Training Details}\label{app:rft_training_details}

Reinforcement Fine-Tuning (RFT) aligns a policy model $\pi_{\theta}$ by maximizing expected return under prompts $q\sim p_Q$ and model-generated responses $o\sim \pi_{\theta}(\cdot\mid q)$:
\[
J(\pi_\theta)
=\mathbb{E}_{q\sim p_Q}\Big[\mathbb{E}_{o\sim \pi_\theta(\cdot\mid q)}[R(q,o)]
-\beta\,D_{\mathrm{KL}}\!\big(\pi_\theta(\cdot\mid q)\,\|\,\pi_{\mathrm{ref}}(\cdot\mid q)\big)\Big],
\]
where $R(q,o)=\sum_{t=1}^{|o|} r(s_t,o_t)$ is the trajectory return and $\pi_{\mathrm{ref}}$ is an optional reference policy.

In practice, policy-gradient methods such as PPO \citep{schulman2017proximal} optimize a clipped surrogate objective using an importance ratio against the pre-update policy $\pi_{\theta_{\text{old}}}$:
\[
J_{\mathrm{PPO}}(\pi_\theta)
=\mathbb{E}_{q\sim p_Q,\,o\sim \pi_{\theta_{\text{old}}}(\cdot\mid q)}
\sum_{t=1}^{|o|}
\min\!\Big(r_t(\theta)\hat A_t,\ \mathrm{clip}(r_t(\theta),1-\epsilon,1+\epsilon)\hat A_t\Big),
\]
with $r_t(\theta)=\frac{\pi_\theta(o_t\mid q,o_{<t})}{\pi_{\theta_{\text{old}}}(o_t\mid q,o_{<t})}$. GRPO \citep{shao2024deepseekmath} removes the need for a learned value function by sampling a group of $G$ responses $\{o_i\}_{i=1}^G$ per question and using a group-relative advantage for all tokens in response $o_i$:
\[
\hat A_i
=\frac{R(q,o_i)-\mathrm{mean}(\{R(q,o_j)\}_{j=1}^G)}{\mathrm{std}(\{R(q,o_j)\}_{j=1}^G)}.
\]
However, the standard GRPO objective additionally aggregates token losses with a per-response length normalization factor $1/|o_i|$, and together with per-question $\mathrm{std}(\cdot)$ normalization this induces an optimization bias. Dr.GRPO \citep{drgrpo} mitigates this bias by removing the $1/|o_i|$ token-aggregation term and removing the per-question $\mathrm{std}(\cdot)$ normalization, yielding an unbiased policy-gradient estimator and improved token efficiency.

In our experiments, we consistently apply Dr.GRPO algorithm in RFT training across baselines and \ourmethod variants. We provide RFT training parameters as below in Table \ref{tab:rl-hparams}.
\begin{table}[ht]
\centering
\begin{threeparttable}
\caption{Shared RFT training hyperparameters. We use the \texttt{verl} library \citep{sheng2025hybridflow} for RFT.}
\label{tab:rl-hparams}
\begin{tabular}{ll}
\toprule
\textbf{Parameter} & \textbf{Value} \\
\midrule
Algorithm & Dr,GRPO \\
Rollouts per prompt & 8 \\
Learning rate & $1 \times 10^{-6}$ \\
Learning rate schedule & Constant with no warmup \\
Global batch size & 128 \\
Reward-level KL coefficient & $1 \times 10^{-3}$ \\
Max. training steps & 1000 \\
Max. gen. tokens & 2048 \\
Training GPUs & 8$\times$ NVIDIA H100s \\
$\pi_J$ & GPT-OSS-120B~\citep{agarwal2025gpt} \\
Optimizer & AdamW~\citep{loshchilov2017decoupled} \\
Parallelism strategy & FSDP~\citep{rajbhandari2020zero} \\
\bottomrule
\end{tabular}
\begin{tablenotes}\footnotesize
\item Note: We use the PyTorch FSDP implementation; see \url{https://pytorch.org/docs/stable/fsdp.html}.
\end{tablenotes}
\end{threeparttable}
\end{table}



\newpage
\section{RFT Training Results}\label{app:rft_training_res}

\begin{table}[h]

\centering

\small

\caption{Performance (in percentage) on BiGGen Bench of the base models (pre-RFT) and models fine-tuned with various methods. $\ourmethod_\text{WU}$ (highlighted in gray) consistently improves performance across both Qwen3-4B and Llama3.1-8B backbones, yielding gains across evaluation axes as well as the overall score.}

\label{tab:biggen-cap}


\newcolumntype{C}[1]{>{\centering\arraybackslash}p{#1}}

\newcommand{\colW}{1.05cm} 

\resizebox{\linewidth}{!}{%

\begin{tabular}{l l *{8}{C{\colW}} c}

\toprule



\textbf{Model} & \textbf{Method} & \textbf{IF} & \textbf{Ground.} & \textbf{Refine.} & \textbf{Plann.} & \textbf{ToM} & \textbf{Reason.} & \textbf{Tool} & \textbf{Safety} & \textbf{Overall} \\

\midrule

\multirow{7}{*}{\textbf{Qwen3}}

& Base Model            & 76.4 & 85.6 & 79.7 & 84.0 & 84.4 & 86.0 & 60.8 & 75.9 & 77.9 \\

& LLM Judge as Reward      & 79.0 & 87.0 & 78.7 & 82.2 & 85.4 & 85.2 & 65.7 & 75.4 & 78.5 \\

& LLM Rubrics (w. resp.)          & 79.6 & 82.6 & 83.1 & 83.5 & 86.0 & 86.6 & 73.2 & 71.1 & 79.2 \\

& Chasing the Tail      & 80.2 & 86.0 & 80.2 & 82.5 & 84.8 & 86.0 & 63.1 & 74.6 & 78.6 \\

& $\ourmethod_{\text{uniform}}$ & 79.2 & 87.8 & 83.6 & 82.8 & 88.2 & 87.4 & 74.2 & 79.4 & 81.3 \\

& $\ourmethod_{\text{LLM}}$     & 80.2 & 87.2 & 83.1 & 84.3 & 85.7 & 86.0 & 74.2 & 76.7 & 80.6 \\

\rowcolor{verylightgray} & 

$\ourmethod_{\text{WU}}$      & \textbf{82.6} & \textbf{87.6} & \textbf{84.3} & \textbf{85.4} & \textbf{88.6} & \textbf{88.2} & \textbf{74.9} & \textbf{79.5} & \textbf{82.8} \\

\midrule

\multirow{7}{*}{\textbf{Llama3.1}}

& Base Model            & 69.4 & 76.2 & 67.3 & 69.7 & 71.6 & 64.0 & 61.1 & 70.1 & 67.0 \\

& LLM Judge as Reward      & 73.1 & 74.8 & 67.1 & 69.1 & 70.0 & 61.4 & 55.1 & 70.8 & 66.3 \\

& LLM Rubrics (w. resp.)          & 72.6 & 76.6 & 69.2 & 72.7 & 72.6 & 66.2 & 61.4 & 61.0 & 66.9 \\

& Chasing the Tail      & 72.6 & 78.4 & 66.0 & 72.0 & 73.8 & 63.0 & 62.0 & 60.8 & 67.3 \\

& $\ourmethod_{\text{uniform}}$ & 72.4 & 73.0 & 72.6 & 70.2 & 72.6 & 65.8 & 64.8 & 70.9 & 68.6 \\

& $\ourmethod_{\text{LLM}}$     & 73.4 & 78.6 & 65.3 & 72.9 & 74.8 & 64.2 & 63.1 & 67.1 & 68.0 \\

\rowcolor{verylightgray} & 

$\ourmethod_{\text{WU}}$      & \textbf{74.6} & \textbf{79.1} & \textbf{73.3} & \textbf{73.1} & \textbf{76.4} & \textbf{67.4} & \textbf{64.9} & \textbf{70.9} & \textbf{71.1} \\

\bottomrule

\end{tabular}%

}
\label{tab:biggenbench_main}
\end{table}

\begin{table}[h]

\centering

\small 

\setlength{\tabcolsep}{5pt}

\caption{Performance (in percentage) on HealthBench-Hard of the base model (pre-RFT) and models fine-tuned with various methods. $\ourmethod_\text{WU}$ outperforms both the base model and RFT baselines on key axes -- including instruction following (IF), accuracy, completeness, context, and the overall score -- when using Qwen3-4B as the backbone model. It also achieves comparable performance on the communication axis relative to other \ourmethod-variants. Similar performance gains are observed when using Llama3.1-8B as the backbone, demonstrating the robustness of $\ourmethod_\text{WU}$ across architectures.}

\label{tab:healthbench_main}


\newcolumntype{C}[1]{>{\centering\arraybackslash}p{#1}}

\newcommand{\hbColW}{1.8cm}

\resizebox{0.98\linewidth}{!}{%

\begin{tabular}{l l *{5}{C{\hbColW}} c}

\toprule



\textbf{Model} & \textbf{Method} & \textbf{IF} & \textbf{Accuracy} & \textbf{Completeness} & \textbf{Communication} & \textbf{Context} & \textbf{Overall} \\

\midrule

\multirow{7}{*}{\textbf{Qwen3}}

& Base Model                  & 48.3 & 59.7 & 59.2 & 63.3 & 42.4 & 55.9 \\

& LLM Judge as Reward      & 51.0 & 57.0 & 57.9 & 64.8 & 34.9 & 57.0 \\

& LLM Rubrics (w. resp.)& 49.0 & 60.8 & 63.3 & 65.7 & 40.6 & 59.3 \\

& Chasing the Tail        & 48.8 & 60.9 & 55.5 & 64.8 & 39.7 & 58.7 \\

& $\ourmethod_{\text{uniform}}$           & 49.5 & 63.1 & 54.9 & 68.6 & 37.7 & 60.1 \\

& $\ourmethod_{\text{LLM}}$               & 50.7 & 64.2 & 66.6 & 69.0 & 47.2 & 60.9 \\

\rowcolor{verylightgray} & 

$\ourmethod_{\text{WU}}$                 & \textbf{64.3} & \textbf{65.2} & \textbf{71.7} & \textbf{67.7} & \textbf{47.4} & \textbf{61.7} \\

\midrule

\multirow{7}{*}{\textbf{Llama3.1}}

& Base Model                  & 33.0 & 40.7 & 34.3 & 57.3 & 23.1 & 37.1 \\

& LLM Judge as Reward      & 39.0 & 46.7 & 45.1 & 59.4 & 27.5 & 44.1 \\

& LLM Rubrics (w. resp.)& 36.2 & 49.6 & 55.2 & 53.6 & 35.1 & 47.2 \\

& Chasing the Tail        & 36.2 & 48.9 & 55.3 & 56.0 & 38.6 & 47.5 \\

& $\ourmethod_{\text{uniform}}$           & 36.3 & 49.3 & 56.5 & 55.7 & 37.7 & 47.6 \\

& $\ourmethod_{\text{LLM}}$               & 42.2 & 48.0 & 58.0 & 62.5 & 37.6 & 48.5 \\

\rowcolor{verylightgray} & 

$\ourmethod_{\text{WU}}$                 & \textbf{41.8} & \textbf{52.6} & \textbf{58.2} & \textbf{62.3} & \textbf{39.9} & \textbf{49.3} \\

\bottomrule

\end{tabular}%

}

\end{table}

\newpage
\section{Prompts}\label{app:prompts}
\subsection{Rubric Generation}
\begin{promptenv}{Prompt for initial rubric generation.}

Role: You are a rubric designer for an LLM-as-judge system.

\par\vspace{1em}
Inputs you will receive:
\begin{itemize}
	\item Prompt: the task/question the response must answer.
    \item \textit{Responses: a set of responses to be evaluated against rubrics.}
\end{itemize}

\par\vspace{1em}
Goal: Design a comprehensive set of rubrics for evaluating responses to the given prompt. Only write rubrics you are confident about. Only propose the best new rubrics.

\par\vspace{1em}
Requirements:
\begin{itemize}
    \item Propose rubrics that collectively cover the most important dimensions needed to judge whether a response correctly and helpfully satisfies the prompt.
    \item Each rubric must be consistently judgeable across many responses (avoid vague wording like ``good'', ``nice'', ``high-quality'').
	\item Each rubric must be prompt-specific (tied to what the user asked), not generic writing advice.
	\item Each rubric should be written as a single criterion with clear, binary pass/fail boundaries. Prefer objective checks.
    \item New rubric MUST NOT answer the question directly.
    \item \textit{New rubric MUST NOT repeat any of the responses provided.}
\end{itemize}

\par\vspace{1em}
Tips for writing good rubrics:

i. MECE:
\begin{itemize}[leftmargin=1.2em]
  \item Mutually Exclusive, Collectively Exhaustive
\end{itemize}

\par\vspace{1em}
ii. Completeness:
\begin{itemize}[leftmargin=1.2em]
  \item Consider all the elements you would want to include to create a perfect response and put them into the rubric. This means including not only the facts and statements directly requested by the prompt, but also the supporting details that provide justification, reasoning, and logic for your response. Each of these elements should have a criterion because each criterion helps to develop the answer to the question from a slightly different angle.
\end{itemize}

\par\vspace{1em}
iii. No overlapping:
\begin{itemize}[leftmargin=1.2em]
  \item the same error from a model shouldn't be punished multiple times.
\end{itemize}

\par\vspace{1em}
iv. Diversity:
\begin{itemize}[leftmargin=1.2em]
  \item The rubric items should include variable types of information.
  \item If all criteria are like ``the response mentions A'', ``the response mentions B'', then this is not a good rubric.
\end{itemize}

\par\vspace{1em}
v: How many rubric items for each prompt
\begin{itemize}[leftmargin=1.2em]
  \item There is no golden standard, and the desired number of rubrics varies by accounts and task types.
  \item Write rubrics that cover all aspects of an ideal response.
\end{itemize}

\par\vspace{1em}
vi: How many rubric items to fail
\begin{itemize}[leftmargin=1.2em]
  \item A good rule of thumb is that the model fails on 50 per cent of rubrics items
\end{itemize}

\par\vspace{1em}
vii: Atomicity / Non-stacked
\begin{itemize}[leftmargin=1.2em]
  \item Each rubric criterion should evaluate exactly one distinct aspect. Avoid bundling multiple criteria into a single rubric. Most stacked criteria with the word ``and'' can be broken up into multiple pieces.
  \item BAD: Response identifies George Washington as the first U.S. president and mentions he served two terms.
  \item GOOD: Response identifies George Washington as the first U.S. president.
  \item GOOD: Response mentions that George Washington served two terms.
\end{itemize}

\par\vspace{1em}
viii: Specificity
\begin{itemize}[leftmargin=1.2em]
  \item Criteria should be binary (true or false) and objective.
  \item Avoid vague descriptions (e.g., ``the response must be accurate'' is vague).
  \item Example: ``The response should list exactly three examples.''
\end{itemize}

\par\vspace{1em}
ix: Self-contained
\begin{itemize}[leftmargin=1.2em]
  \item Each criterion should contain all the information needed to evaluate a response. For example: ``Mentions the capital city of Canada'' \(\rightarrow\) BAD; ``Mentions the capital city of Canada is Ottawa'' \(\rightarrow\) GOOD.
\end{itemize}

\par\vspace{1em}
x: Criterion should be verifiable without requiring external search.
\begin{itemize}[leftmargin=1.2em]
  \item BAD: Response names any of the Nobel Prize winners in Physics in 2023
  \item GOOD: Response names any of the following Nobel Prize winners in Physics in 2023: Pierre Agostini, Ferenc Krausz, or Anne L'Huillier.
\end{itemize}

\par\vspace{1em}
Below are the inputs:
\begin{itemize}[leftmargin=1.2em]
  \item \texttt{Here is the user prompt for which we want to generate a rubric:}\\
        \texttt{PROMPT: \{prompt\}}
  \item {\itshape
        \texttt{Here are a list of reference responses:}\\
        \texttt{RESPONSES: \{responses\}}
        }
\end{itemize}

\par\vspace{1em}
Output STRICTLY in below format. No other text is allowed:
\begin{itemize}[leftmargin=1.2em]
    \item \texttt{<RUBRIC> Rubric 1 </RUBRIC>}
    \item \texttt{<RUBRIC> Rubric 2 </RUBRIC>}
    \item \texttt{...}
\end{itemize}

\end{promptenv}

Note: \textit{Italic} parts are only included when generating rubrics with sampled responses.

\begin{promptenv}{Prompt for rubric decomposition.}

Role: You are a rubric designer for an LLM-as-judge system.

\par\vspace{1em}
Inputs you will receive:
\begin{itemize}[leftmargin=1.2em]
	\item Prompt: the task/question the response must answer.
    \item Responses: a set of responses to be evaluated against rubrics.
	\item Current rubric: criterion currently used by a judge. This rubric has already been satisfied by multiple responses, so it is too coarse and fail to distinguish response quality.
    \item Other rubrics: Other rubrics that the new rubric must NOT overlap with.
\end{itemize}

\par\vspace{1em}
Goal: Propose exactly TWO new rubrics that are more granular than the existing ones and can better differentiate candidate responses. Only write rubrics you are confident about. Only propose the best new rubrics.

\par\vspace{1em}
What ``more granular'' means (requirements):
\begin{itemize}[leftmargin=1.2em]
    \item Each new rubric must target a specific, discriminative dimension of quality that is not sufficiently captured by the existing rubrics (e.g., completeness of key sub-steps, correctness of constraints, justification quality, handling of edge cases, faithfulness to prompt format, etc.).
    \item New rubrics should NOT miss critical information contained in the existing rubric.  
	\item Each rubric must be consistently judgeable across many responses (avoid vague wording like ``good'', ``nice'', ``high-quality'').
	\item Each rubric must be prompt-specific (tied to what the user asked), not generic writing advice.
	\item Each rubric should be written as a single criterion with clear, binary pass/fail boundaries. Prefer objective checks.
    \item New rubric MUST NOT repeat any of the responses provided.
    \item New rubric MUST NOT answer the question directly.
\end{itemize}

\par\vspace{1em}
Tips for writing good rubrics:

\textit{The same as those in the prompt for initial rubric generation. These have been omitted here for brevity.}

\par\vspace{1em}
Below are the inputs:
\begin{itemize}[leftmargin=1.2em]
  \item \texttt{Here is the user prompt for which we want to generate a rubric:}]\\
  \texttt{PROMPT: \{prompt\}}
  \item \texttt{Here are a list of reference responses:}\\
  \texttt{RESPONSES: \{responses\}}
  \item \texttt{Here is existing rubric to improve:}\\
  \texttt{RUBRIC: \{rubric\}}
  \item \texttt{Here is other rubric that the new rubric should NOT overlap with:}\\
  \texttt{OTHER RUBRIC: \{other\_rubric\}}
\end{itemize}

\par\vspace{1em}
Output STRICTLY in below format. No other text is allowed:
\begin{itemize}[leftmargin=1.2em]
    \item \texttt{<RUBRIC> New rubric 1 </RUBRIC>}
    \item \texttt{<RUBRIC> New rubric 2 </RUBRIC>}
\end{itemize}

\end{promptenv}

\subsection{Filtering}

\begin{promptenv}[label={pr:overlap}]{Prompt for checking overlapping rubrics.}
You are checking whether a new rubric substantially overlaps with ANY of the existing rubrics. If ANY overlap is found, output YES; otherwise output NO.

\par\vspace{1em}
Definition of substantial overlapping:
\begin{itemize}[leftmargin=1.2em]
  \item The new rubric has the same intent as an existing rubric, or is a strict subset/superset of it, or \(\geq 70\%\) of its meaning is covered by the existing rubric so that applying both would not materially change scoring outcomes.
  \item Match on meaning, not wording. Treat synonyms, paraphrases, and inverses with the same effect as overlapping (e.g., ``be concise'' \(\approx\) ``avoid unnecessary verbosity'').
  \item Ignore trivial phrasing, tone, and example differences unless they change the requirement.
\end{itemize}

\par\vspace{1em}
Edge cases:
\begin{itemize}[leftmargin=1.2em]
  \item If scopes are disjoint (different capability/goal) \(\rightarrow\) NO.
  \item If the new rubric adds only minor qualifiers (e.g., ``clearly''/``appropriately'') without changing evaluation \(\rightarrow\) YES.
  \item If the new rubric merely narrows the context while keeping the same criterion (subset) or broadens it (superset) \(\rightarrow\) YES.
\end{itemize}

\par\vspace{1em}
Input format:
\begin{itemize}[leftmargin=1.2em]
  \item \texttt{EXISTING\_RUBRICS: \{existing\_rubrics\}}
  \item \texttt{NEW\_RUBRIC: \{new\_rubric\}}
\end{itemize}

\par\vspace{1em}
Output STRICTLY in below format. No other text is allowed:
\begin{itemize}[leftmargin=1.2em]
  \item \texttt{<EVALUATION> YES/NO </EVALUATION>}
\end{itemize}
\end{promptenv}

\begin{promptenv}[label={pr:conflict}]{Prompt for checking conflicting rubrics.}
You are checking whether a new rubric expresses opposite meaning
of ANY of the existing rubrics. If ANY opposition is found, output YES; otherwise output NO.

\par\vspace{1em}
Definition of opposition:
\begin{itemize}[leftmargin=1.2em]
  \item ``Opposite'' means the new rubric asserts the negation or reverse polarity of the same requirement, property, or direction as an existing rubric.
  \item Examples:
  \begin{itemize}[leftmargin=1.6em]
    \item require X  \(\leftrightarrow\)  forbid/avoid X
    \item must include X  \(\leftrightarrow\)  must NOT include X
    \item prefer more of X  \(\leftrightarrow\)  prefer less of X (same X, opposite direction)
    \item answer should be optimistic  \(\leftrightarrow\)  answer should be pessimistic
  \end{itemize}
  \item Do NOT flag different targets or contexts.
  \item Do NOT flag orthogonal dimensions (e.g., tone vs citations or ``be clear'' vs ``be concise'')
  \item Do NOT flag mere differences in emphasis, strength, scope, or style.
  \item Do NOT flag stricter/looser thresholds unless they clearly reverse direction on the same axis (e.g., ``maximize brevity'' vs ``maximize elaboration'' = opposite; ``\(\leq 120\) words'' vs ``\(\leq 150\) words'' = NOT opposite).
\end{itemize}

\par\vspace{1em}
Input format:
\begin{itemize}[leftmargin=1.2em]
  \item \texttt{EXISTING\_RUBRICS: \{existing\_rubrics\}}
  \item \texttt{NEW\_RUBRIC: \{new\_rubric\}}
\end{itemize}

\par\vspace{1em}
Output STRICTLY in below format. No other text is allowed:
\begin{itemize}[leftmargin=1.2em]
  \item \texttt{<EVALUATION> YES/NO </EVALUATION>}
\end{itemize}
\end{promptenv}

\subsection{Evaluation}
\begin{promptenv}[label={pr:rubricjudge}]{Judge prompt for rubric-based response evaluation.}
You are a judge, evaluating whether a response satisfies the given rubric. If the response satisfies the criterion of the rubric, output YES; otherwise output NO.  

\par\vspace{1em}
Requirement:
\begin{itemize}[leftmargin=1.2em]
      \item You must follow the rubric strictly, and only consider the criteria listed in the rubric.
      \item You must NOT consider any other factors, such as your own opinions or external knowledge.
\end{itemize}

\par\vspace{1em}
Below between \texttt{<RESPONSE>} and \texttt{</RESPONSE>} is the response to evaluate on
\begin{itemize}[leftmargin=1.2em]
  \item \texttt{<RESPONSE> \{response\} </RESPONSE>}
\end{itemize}

\par\vspace{1em}
Below between \texttt{<RUBRIC>} and \texttt{</RUBRIC>} is the rubric to evaluate on
\begin{itemize}[leftmargin=1.2em]
  \item \texttt{<RUBRIC> \{rubric\} </RUBRIC>}
\end{itemize}

\par\vspace{1em}
Output STRICTLY in below format. No other text is allowed:
\begin{itemize}[leftmargin=1.2em]
  \item \texttt{<EVALUATION> YES/NO </EVALUATION>}
\end{itemize}
\end{promptenv}

\newpage
\section{Qualitative Examples}\label{app:rubric_examples}

In this section, we present representative rubric outputs from multiple datasets, showing both the initially proposed rubrics and the corresponding refined rubrics produced through our recursive decomposition and filtering process.

Note: Due to the large number of task prompts in each dataset, and the considerable length of some prompts and their associated rubric sets, we do not display all instances, as doing so would be prohibitively difficult to read. We present representative examples to qualitatively illustrate how \ourmethod enhances both the granularity and coverage of rubric-based evaluation.

\begin{tcolorbox}[breakable,colback=orange!5,colframe=orange!40!black,title=\textbf{Example from JudgeBench Dataset}]

Prompt: 

A college student initiated a criminal case against his former roommate, claiming the former roommate snuck in and removed his state-of-the-art home theater equipment from his house. The college student took the complaint papers that were approved by the prosecution and filed them with a magistrate court. An ongoing police investigation later determined, however, that the college student was a vengeful person not averse to creating stories, and that he and the former roommate had been arguing over several related issues. They also determined that he never had any quality sound or video equipment. The police dropped the prosecution, and the criminal case was dismissed at the preliminary hearing. When the former roommate filed a civil case against the college student alleging the tort of malicious prosecution, will evidence of the student's reputation for dishonesty be admissible and why? 

\begin{itemize}
    \item (A) Yes, because the student's reputation for dishonesty directly relates to the tort of malicious prosecution.
    \item (B) No, because reputation evidence is considered too prejudicial to be admissible.
    \item (C) Yes, because dishonesty at any time in the past is always relevant to a malicious prosecution complaint.
    \item (D) Yes, because the student's dishonesty directly affects the credibility of his initial criminal complaint.
    \item (E) Yes, because reputation evidence, just like most character evidence, is virtually always admissible.
    \item (F) Yes, because even though character evidence is too collateral to be admitted for circumstantial evidence, it is admissible if it is directly at issue in the case.
    \item (G) No, because character evidence is only admissible if it is directly related to the act in question.
    \item (H) No, because the past reputation of a person is not considered as evidence in court.
    \item (I) Yes, because while most character evidence is not admissible, some is, and reputation evidence is always admissible.
    \item (J) No, because character evidence is never admissible in any case.
\end{itemize}
If you cannot determine the correct multiple-choice answer, take your best guess. Once you have your answer, please duplicate that letter five times in a single string. For example, if the answer is K, then write KKKKK. Let's think step by step.

\par\vspace{1em}
Initial rubrics:
\begin{enumerate}[leftmargin=1.2em]
    \item Correctly identifies that the admissibility of reputation evidence depends on its relevance to an element of the tort of malicious prosecution.
    \item Distinguishes between general character evidence and reputation evidence admissible when character is directly at issue.
    \item Correctly considers the legal elements required to prove malicious prosecution.
    \item Accurately addresses whether the student’s reputation for dishonesty is being used for propensity reasoning or as direct evidence of an element in the case.
    \item Avoids overgeneralization regarding the admissibility of reputation or character evidence.
    \item Acknowledges that rules of evidence may treat civil and criminal cases differently regarding character evidence.
\end{enumerate}

\par\vspace{1em}
Final rubrics:
\begin{enumerate}[leftmargin=1.2em]
    \item Clearly identifies the key elements required to prove malicious prosecution: institution of criminal proceedings, termination in plaintiff’s favor, absence of probable cause, and malice.
    \item Correctly articulates the general rule that character evidence is inadmissible to prove a person acted in conformity with that character on a particular occasion, known as the propensity rule.
    \item Accurately identifies the exception that character evidence is admissible when character is an essential element of a claim or defense, known as character being directly at issue.
    \item Explains how the student’s reputation for dishonesty directly relates to proving lack of probable cause and malice in a malicious prosecution case.
    \item Explicitly mentions the role of the student’s reputation for dishonesty in determining whether the student had an honest belief that a crime occurred.
    \item Assesses the relevance of character evidence by determining whether it significantly impacts a central factual question or element in the case, beyond mere propensity.
    \item Assesses the impact of admitting reputation evidence on the balance of prejudice versus probative value, specifically in relation to a malicious prosecution claim.
    \item Assesses the contextual relevance and potential prejudicial impact of introducing reputation evidence of dishonesty, weighing its probative value against its potential to unfairly bias the court.
    \item Explores the connection between the student’s reputation and the historical pattern of behavior, evaluating whether this pattern offers insight into a motive or strategy crucial for the malicious prosecution claim, without relying on the propensity argument.
    \item Clearly distinguishes between the legal concept of reputation and character as they apply to the admissibility of evidence, and explains any differences in their impact on the case.
\end{enumerate}
\end{tcolorbox}

\par\vspace{3em}

\begin{tcolorbox}[breakable,colback=orange!5,colframe=orange!40!black,title=\textbf{Example from PPE Preference Dataset}]
Prompt:

Write an email for selling my website speed optimization services to e-commerce business owners. Make it at short as possible. Focus in benefits (revenue or money) that i make for them.

\par\vspace{1em}
Initial rubrics:
\begin{enumerate}[leftmargin=1.2em]
    \item Mentions website speed optimization as the core service offered.
    \item Specifically targets e-commerce business owners as the intended audience.
    \item Includes at least one concrete business benefit tied to money or revenue.
    \item Maintains a short and concise format (no more than ~100 words).
    \item Avoids technical jargon or implementation details unrelated to benefits.
    \item Includes a clear call to action (e.g., reply, book a call, request audit).
    \item Avoids generic claims unrelated to website performance or monetary benefit.
\end{enumerate}

\par\vspace{1em}
Final rubrics:
\begin{enumerate}[leftmargin=1.2em]
    \item Mentions that faster loading times lead to higher sales or increased revenue.
    \item Advises that every second of delay can reduce sales or mentions specific loss metrics.
    \item Offers a specific service aimed at optimizing website speed for e-commerce.
   \item Includes a call to action inviting the recipient for further engagement, such as a chat or a video.
   \item Uses an analogy or relatable scenario to illustrate the impact of website speed on sales or customer retention.
   \item Provides an option to receive additional information or a report, implying a no-obligation offer.
   \item References an impact of speed on conversion rates, either generally or with specific percentages.
   \item Utilizes a subject line that directly mentions website speed and its impact on sales/revenue.
    \item Highlights specific benefits for customer satisfaction or user experience beyond direct revenue increase, such as improved browsing experience or reduced bounce rates.
    \item Craft a customizable template by including placeholders for recipient's name and company name.        
    \item Establishes credibility by referencing a well-known company or study related to website optimization and its impact on revenue.
    \item Utilizes persuasive language that emotionally appeals to the business owner's pain points, like frustration from slow checkout or fear of losing customers, to emphasize urgency and drive interest in services offered.
    \item Incorporates a sense of urgency or a compelling reason to act quickly within the email message.
    \item Emphasizes the competitive advantage gained by having a faster-loading website compared to other e-commerce businesses.
    \item Initiates engagement by offering a free initial assessment or audit of the recipient's current website performance to demonstrate potential improvements.
    \item Explicitly quantifies potential revenue increase with specific dollar amounts, contrasting current losses due to slow website speed with projected gains post-optimization.
    \item Incorporates industry-specific examples or case studies to demonstrate proven results of website speed optimization in increasing e-commerce sales.
    \item Clearly differentiates the email by employing a visually unique format or structure that stands out, such as bullet points, bold headings, or highlighting key revenue benefits, to enhance readability and impact.
    \item Measure the concise and efficient use of language in the email, aiming to keep the message succinct yet informative, meeting the prompt's requirement for brevity.
    \item Mentions strategies or tools used in the website speed optimization, detailing specific technologies or methodologies applied to achieve faster loading times, such as caching, image optimization, or content delivery networks.
    \item Specifies how increased website speed can amplify the business's online visibility or improve SEO rankings, thus contributing indirectly to revenue growth.
    \item Informs the recipient that an optimized website speed can lead to improved search engine rankings, thereby attracting more organic traffic and increasing potential sales.
    \item Demonstrates insight into user behavior changes stemming from improved website speed, such as increased time spent on site or higher interaction rates.
    \item Incorporates a personalization element that leverages the recipient's specific industry or e-commerce niche, showcasing tailored expertise in optimizing website speed for their unique business needs.
    \item Evaluates whether the email offers a personalized benefit or advantage uniquely tailored to the recipient's business or e-commerce niche, emphasizing customization over generic benefits.
    \item Specifies how optimized website speed can enhance mobile user experiences, which is crucial for e-commerce businesses to capture a significant portion of their audience.
    \item Presents a scalable solution that can easily integrate into the recipient's existing e-commerce infrastructure, emphasizing its convenience and low disruption to current operations.
    \item Clearly outlines the process or specific steps involved in the website speed optimization service to give the reader insight into how the transformation will be achieved.
    \item The email creatively uses a metaphor or storytelling that evokes a vivid image to emphasize the importance of website speed optimization, enhancing the overall persuasive impact.
    \item Ensures the email utilizes statistical evidence or factual data, such as industry benchmarks, to validate the benefits of speed optimization comprehensively, enhancing the persuasive power through quantifiable credibility.
\end{enumerate}
\end{tcolorbox}

\par\vspace{3em}
\begin{tcolorbox}[breakable,colback=orange!5,colframe=orange!40!black,title=\textbf{Example of a simple task from PPE Preference Dataset}]
Prompt:

There are 10 apples today. 1 was eaten yesterday. How many are left now? \textit{(Original prompt in Chinese -- translated to English for illustration)}

\par\vspace{1em}
Initial rubrics:
\begin{enumerate}[leftmargin=1.2em]
\item States that there are 10 apples remaining today, regardless of yesterday's consumption.
\item Explains that the apples consumed yesterday do not affect today's apple count of 10.
\item Acknowledges that the count of apples as of today is based on the statement ``There are 10 apples today'' without reference to past events.
\item Provides a rationale or explanation for considering the past apple consumption as irrelevant to the current count.
\item Avoids making quantitative calculations that incorrectly deduct apples from today's count based on past consumption.
\item Does not ambiguously imply there could be fewer or more than 10 apples today without any additional information provided in the prompt.
\end{enumerate}

\par\vspace{1em}
Final rubrics:
\begin{enumerate}[leftmargin=1.2em]
\item (Same as the initial rubrics -- omit for brevity.)
\item Provides a creative interpretation of the scenario, incorporating narrative or imaginative elements that explain the unchanged apple count.
\item Uses narrative or imaginative storytelling elements without deviating from the stated fact that today's count is 10 apples, enhancing engagement while maintaining accuracy.
\end{enumerate}
\end{tcolorbox}

\par\vspace{3em}
\begin{tcolorbox}[breakable,colback=orange!5,colframe=orange!40!black,title=\textbf{Example of WildChat Dataset}]

\textbf{Example 1}
\par\vspace{1em}

Prompt: 

What influence did Greek thought have on the emergence and development of science?

\par\vspace{1em}
Initial rubrics:
\begin{enumerate}[leftmargin=1.2em]
\item Identifies Greek thought as a historical antecedent rather than a modern scientific contributor.
\item Describes at least one conceptual contribution of Greek thought to the development of scientific reasoning.
\item Mentions at least one historical figure or school of Greek thought in a scientific or proto-scientific context.
\item Acknowledges the role of Greek thought in shaping methodological or epistemological foundations of science.
\item Avoids attributing anachronistic concepts of modern science (e.g., peer review, experimental control) directly to the Greeks.
\item Maintains focus on Greek influence specifically, not general ancient civilizations.
\item Frames the influence of Greek thought in terms of its impact on later scientific development (e.g., during the Islamic Golden Age, Renaissance, Enlightenment).
\end{enumerate}

\par\vspace{1em}
Final rubrics:
\begin{enumerate}[leftmargin=1.2em]
\item The response mentions Thales of Miletus predicting an eclipse as an example of early scientific reasoning by the Greeks.
\item The response includes an explanation of the Greek concept of 'Kosmos' as an ordered, harmonious system significant to scientific thought.
\item The response describes the influence of Pythagorean mathematics on the development of scientific principles.
\item The response discusses Socratic questioning or the Socratic method as essential for scientific inquiry.
\item The response acknowledges errors in Greek science but emphasizes their foundational contributions to scientific development.
\item The response cites Euclid's "Elements" and its impact on geometry as a part of scientific advancement.
\item The response mentions the Greek transition from mythological explanations to logical reasoning in understanding the universe.
\item The response includes reference to Greek thinkers' belief in a rational, intelligible universe governed by laws.
\item The response discusses Aristotle’s role in the systematization and classification of the natural world.
\item The response explores Greek contributions to observational and empirical methods, such as Hippocrates' emphasis on documenting symptoms, in the development of scientific methodology.
\item The response addresses the synthesis of empirical observation and logical reasoning as a Greek methodological advancement in the pursuit of understanding the natural world.
\item The response identifies the impact of Greek dialectic methods on shaping the peer review process used in modern science.
\item The response explains the role of Greek mathematics, particularly its use in describing physical phenomena, in influencing later scientific approaches to the natural world.
\item The response illustrates how Greek thought contributed to the formation of a foundational ethical framework for science, highlighting examples like the Hippocratic Oath.
\item The response explores the role of Archimedes in advancing principles of physics and engineering, illustrating the practical application of Greek thought to material reality.
\item The response illustrates how Greek philosophical discourse in public spaces, such as the Agora, contributed to the communal and social nature of scientific inquiry.
\item The response explores the role of Greek philosophical schools, such as the Atomists or Stoics, in shaping fundamental principles that influenced later scientific exploration.
\item The response discusses the role of Greek thought in shaping the dual approach of both theoretical abstraction and practical experimentation which became fundamental to later scientific advancements, despite the Greeks not fully implementing experimentation themselves.
\item The response showcases how Greek thought laid the groundwork for interdisciplinary thinking, mixing philosophy, mathematics, and early natural sciences to broaden approaches in scientific exploration.
\item The response highlights the symbolic aspect of Greek thought as initiating a "cathedral of knowledge," metaphorically illustrating the foundational impact on science across millennia.
\item The response highlights the impact of Greek thought in establishing a systematic methodology for distinguishing between subjective beliefs and objective facts, facilitating a structured framework for scientific inquiry.
\item The response explores the lasting cultural and intellectual impact of Greek thought on the Renaissance and subsequent scientific revolution, highlighting the transfer and transformation of Greek ideas across different historical periods.
\item The response explains how Greek thought established an early conceptual framework for distinguishing between qualitative and quantitative analysis in scientific inquiry.
\item The response explores how Greek advancements in abstract thought, particularly through Plato's ideal forms, contributed to the long-term development of scientific concepts by encouraging the pursuit of universal principles and truths.
\item The response highlights the contribution of Greek exploration of metaphysics and ontology—concepts of being and existence—and their impact on shaping scientific inquiry into the nature of reality.
\item The response outlines the Greek philosophical commitment to deductive reasoning as a critical factor in establishing structured scientific theories.
\end{enumerate}

\par\vspace{3em}
\textbf{Example 2}
\par\vspace{1em}
Prompt:

What skill can I learn in six months and generate a lot of money?

\par\vspace{1em}
Initial rubrics:
\begin{enumerate}[leftmargin=1.2em]
\item Proposes a skill that is realistically learnable within approximately six months.
\item Proposes a skill that has clear, established paths to monetization.
\item Clearly identifies the skill being recommended.
\item Avoids unrealistic earnings claims (e.g., “become a millionaire instantly”) or unverifiable hype.
\item Connects the proposed skill to a plausible use case or real-world application.
\item Maintains relevance to both time constraint (six months) and financial goal (generating a lot of money).
\end{enumerate}

\par\vspace{1em}
Final rubrics:
\begin{enumerate}[leftmargin=1.2em]
\item Identifies and recommends a specific skill that can be learned in six months and is capable of generating substantial income.
\item Provides justification for why the skill chosen is lucrative, referring to current demand and market trends.
\item Lists actionable steps or a clear plan on how to learn the skill within the six-month time frame.
\item Includes examples of tools or platforms necessary to master the skill.
\item Discusses potential income or earning opportunities related to the skill.
\item Addresses the need for the skill to solve specific business problems or provide substantial value to clients.
\item Suggests ways to monetize the skill, such as freelance opportunities, consulting, or creating digital products.
\item Emphasizes the importance of differentiating oneself with complementary skills or additional strategies beyond the primary skill.
\item Mentions the importance of building a portfolio or case studies to showcase expertise in the chosen skill.
\item Encourages pursuing online courses, certification, or self-directed learning as part of the skill acquisition process.
\item Evaluates the long-term sustainability and adaptability of the skill in relation to evolving market conditions and potential technological advancements.
\item Evaluates the transformation of the learned skill into a clearly defined business model, detailing how the skill is integrated into a sustainable, repeatable, and scalable business process.
\item Evaluates the practicality of transitioning from learning the skill to implementing it in real-world scenarios within the six-month time frame.
\item Evaluates the potential scalability or expansion of the skill into a broader business or service offering in the future, beyond just immediate income generation.
\item Evaluates the effectiveness of the skill in addressing industry-specific challenges, detailing how it creates a competitive advantage or fills a gap in the current market.
\item Evaluates the creativity and uniqueness of the proposed learning path by assessing whether the response provides a vision or framework for integrating the skill into the user’s broader personal or professional development roadmap.
\item Evaluates potential challenges or pitfalls in learning the chosen skill and suggests methods to overcome them.
\item Evaluates the clarity and realism of the proposed six-month learning timeline in the context of the user's current knowledge and experience level, assessing whether the response tailors the learning schedule to various starting points of potential learners.
\item Assesses whether the response includes a strategy for gaining the first practical experience or opportunities (e.g., internships, volunteer projects, or initial clients) in the chosen skill to build credibility and real-world expertise.
\item Examines the feasibility and validity of the skill learning timeline by analyzing the skill's complexity, required foundational knowledge, and intensity of the proposed learning schedule.
\item Assesses the completeness and coherence of the narrative or framework describing how the proposed skill will lead to financial success, ensuring it logically connects individual learning steps to the eventual income objectives.
\item Assesses the ability of the response to convey the transformative impact of the skill on both personal growth and career trajectory, highlighting the broader implications of mastering the skill beyond immediate financial gain.
\item Evaluates the response’s inclusion of strategies to enhance personal branding related to the skill, including creating an online presence or sharing work publicly to attract potential clients or employers.
\item Assesses whether the response effectively illustrates a unique marketing or networking strategy to connect with potential clients or build a professional reputation in the first stages of the skill application.
\item Emphasizes the need to practice and iterate on real-world projects or scenarios relevant to the chosen skill, demonstrating iterative improvement and application of feedback.
\item Assesses the response for providing specific examples of successful individuals or case studies who have effectively learned and monetized the skill within a similar timeframe, showcasing real-world applicability and success stories.
\item Assesses the depth of detail and specificity in outlining how the proposed skill integrates with existing competencies or knowledge, illustrating how this integration enhances the potential for success and financial gain.
\item Assesses the degree to which the response incorporates practical examples or case studies demonstrating the real-world application and impact of the skill learned, highlighting any transformations in business operations or financial performance.
\item Assesses the alignment of the skill with the user's current abilities and interests, ensuring that the skill is not only compatible with but also builds upon their existing strengths, knowledge, or passions to maximize successful implementation and retention.
\item Assesses the ability to effectively communicate and articulate the value proposition of the learned skill to prospective clients or employers, ensuring that the pitch is tailored to their specific needs and objectives.
\item Assesses the response's ability to identify opportunities for networking or collaboration within the industry relevant to the chosen skill, and how this could accelerate learning and increase potential income.
\end{enumerate}

\par\vspace{3em}
\textbf{Example 3}
\par\vspace{1em}
Prompt:

Write me an essay about monkeys.

\par\vspace{1em}
Initial rubrics:
\begin{enumerate}[leftmargin=1.2em]
\item The response is structured in essay format with an introduction, body, and conclusion.
\item The essay remains focused on the topic of monkeys throughout the response.
\item The response refers specifically to monkeys and does not conflate them with apes or other primates.
\item Includes at least one factual detail about monkeys (e.g., diet, habitat, species, behavior).
\item Avoids listing-only or bullet-point format; presented as continuous prose.
\item Does not consist entirely of generic filler or vague commentary.
\item Maintains a neutral or informative tone appropriate for an essay.
\end{enumerate}

\par\vspace{1em}
Final rubrics:
\begin{enumerate}[leftmargin=1.2em]
\item Includes an exploration of monkeys' physical abilities, such as their agility and acrobatics.
\item Mentions the social structures and behaviors observed in monkey societies.
\item Describes the intelligence and problem-solving abilities of monkeys.
\item Examines the evolutionary connection between monkeys and humans.
\item Discusses the ecological role of monkeys, such as seed dispersers.
\item Provides factual information about the diversity of monkey species.
\item Reflects on the emotional experiences shared between humans and monkeys.
\item Utilizes symbolic language to enhance understanding of monkeys beyond biological terms.
\item Considers the environmental habitat diversity in which different monkey species thrive, including their geographical distribution across continents.
\item Compares and contrasts different monkey species to highlight their unique characteristics and ecological niches.
\item Highlights the role of monkeys in scientific research and their contributions to advancements in our understanding of human cognition and disease.
\item Explains the impact of human activity, such as the illegal wildlife trade, on monkey populations.
\item Considers the historical depiction of monkeys in literature and folklore across different civilizations.
\item Discusses the use of metaphorical elements to draw parallels between monkey behavior and broader existential themes or philosophical questions.
\item Discusses the sensory experiences and communication methods of monkeys, such as vocalizations, facial expressions, and tactile behaviors, and their significance in their daily life and interactions.
\item Analyzes the role of monkeys in fostering cross-cultural connections and exchanges, highlighting examples where different cultures have interacted with or been influenced by monkey-related myths, symbols, or depictions.
\item Evaluates the symbolic representation of monkeys as tricksters or wise figures in folklore and what this reveals about human self-perception and cultural values.
\item Analyzes the role of monkeys in maintaining ecological balance and biodiversity through their interactions with other species.
\item Critically analyzes the ethical considerations and responsibilities associated with human interactions and interventions with monkey species, including the implications of research and conservation efforts.
\item Analyzes the role of monkeys in facilitating biodiversity through their interactions with other species in their ecosystems.
\item Discusses the ethical considerations in scientific research involving monkeys, including the balance between scientific advancement and animal welfare.
\item Analyzes the ethical considerations and moral obligations involved in the conservation of monkey habitats and populations.
\item Presents a nuanced exploration of monkeys' interactions with their environment, focusing on their role as adaptors within ecosystems and their ability to thrive across varying environmental conditions.
\item Assesses the depiction of unique communication methods and the role they play in both cooperative interactions and competitive scenarios within monkey societies.
\item Assesses the aesthetic portrayal of monkeys in the essay, considering the use of descriptive imagery to evoke sensory experiences and emotional responses in the reader.
\item Examines the role of monkeys in inspiring artistic and creative expressions, such as in visual arts, music, or dance, and how these reflect human interpretations of monkey characteristics and significance.
\item Explores the challenges related to the social dynamics within monkey populations when faced with environmental pressures or changes, such as altered food availability or increased competition for resources.
\item Discusses the role of monkeys in their local food webs and their interactions with other species as both prey and predators.
\item Assesses the anthropomorphism of monkey behaviors and characteristics, examining how human-like traits are attributed to monkeys in the narrative.
\item Evaluates the physiological adaptations of monkeys that enable them to thrive in varied environmental conditions, such as skin type variations and dietary specializations.
\item Analyzes the adaptation mechanisms monkeys have developed to thrive in varying environmental pressures and challenges, such as climate variations and human-induced habitat changes.
\item Discusses the role of monkeys in influencing art forms, such as visual art, music, and dance, and how these representations impact cultural perception and creative expression.
\item Explores the role of monkeys in allegorical storytelling, examining how they serve as vehicles for moral lessons and social commentary in diverse cultural narratives.
\end{enumerate}
\end{tcolorbox}

\end{document}